\newcommand\mc{\multicolumn{1}{c}}
\title[Data-driven Reachability using Christoffel Functions]{Data-driven Reachability using Christoffel Functions and Conformal Prediction}
\author{\Name{Abdelmouaiz Tebjou} \Email{abdelmouaiz.tebjou@irt-systemx.fr}\\
\addr IRT SystemX, 2 boulevard Thomas Gobert, 91120 Palaiseau, France.\\
\addr U2IS, ENSTA Paris, Institut Polytechnique de Paris, Palaiseau, France.
\AND
\Name{Goran Frehse} 
\Email{goran.frehse@ensta-paris.fr}\\
\addr U2IS, ENSTA Paris, Institut Polytechnique de Paris, Palaiseau, France.%
\AND
\Name{Fa\"icel Chamroukhi}
\Email{faicel.chamroukhi@irt-systemx.fr}\\
\addr IRT SystemX, 2 boulevard Thomas Gobert, 91120 Palaiseau, France.
}
\begin{document}

\maketitle

\begin{abstract}
An important mathematical tool in the analysis of dynamical systems is the approximation of the reach set, i.e., the set of states reachable after a given time from a given initial state. This set is difficult to compute for complex systems even if the system dynamics are known and given by a system of ordinary differential equations with known coefficients. In practice, parameters are often unknown and mathematical models difficult to obtain. Data-based approaches are promised to avoid these difficulties by estimating the reach set based on a sample of states. If a model is available, this training set can be obtained through numerical simulation. In the absence of a model, real-life observations can be used instead. 
A recently proposed approach for data-based reach set approximation uses Christoffel functions to approximate the reach set. 
Under certain assumptions, the approximation is guaranteed to converge to the true solution. In this paper, we improve upon these results by notably improving the sample efficiency and relaxing some of the assumptions by exploiting statistical guarantees from conformal prediction with training and calibration sets. 
In addition, we exploit an incremental way to compute the Christoffel function to avoid the calibration set while maintaining the statistical convergence guarantees.
Furthermore, our approach is robust to outliers in the training and calibration set.
\end{abstract}
\begin{keywords}
data-driven reachability, 
Christoffel functions,
conformal prediction,
probably approximately correct analysis,
statistical learning
\end{keywords}

\section{Introduction}
\label{sec:intro}


The problem of reach set approximation arises in different branches of applied mathematics and computer science, and in particular in control theory. In mathematics, the study of initial value problems and their guaranteed solution raises the question of which states can be reached under different configurations; see, for instance the work of \cite{BerzM98}. In computer science, the computation of reach sets is a fundamental operation in formal methods, which establish the correctness of a system with mathematical rigor. Initially, it was applied to program analysis, e.g., by \cite{halbwachs94b}. Later, the approach was extended to cyber-physical systems, which can involve interacting physical components, software, and communication channels, see \cite{alur2015principles}.
Reach set approximations may take different forms based on whether the focus is on scalability, tightness, or efficient computability. Examples include polyhedra, ellipsoids, polynomial zonotopes, and others; see the overview by \cite{althoff2021set}.
In this paper, we establish reach set approximations that are sublevel sets of polynomials, more precisely, sum-of-squares (SOS) polynomials, which are computationally advantageous. Once established, these can readily be used to investigate properties of regions of attraction, stability, and safety or to solve optimization problems. To achieve this, polynomial reach set approximations have been used as barrier certificates, inductive invariants, or Lyapunov functions; see the survey by \cite{doyen2018handbook}.

Traditionally, reach set approximations are established from first principles, starting from a mathematical model of the dynamics. This approach is limited to cases where sufficiently simple models are available and precise enough. More recently, data-based approaches have been used to deal with systems whose dynamics are too complex or where a model is not available and only observations are at hand.
In the following, we provide a brief overview of such approaches.

\paragraph{Related Work}
The traditional approach to go from data to reach set approximations is to first identify a model of the system dynamics and then analyse the model. To give an example, a linear model can be identified efficiently by subspace identification as proposed by \cite{van2012subspace} and then one of the set-based techniques in the survey by \cite{althoff2021set} can be applied to approximate the reach set at a given time in the future.
This can be extended to uncertain linear models and nonlinear systems based on linearization, as pursued by \cite{Alanwar_2023}.
More recently, it has been proposed to derive reach set approximations more directly from data, e.g., the approach of \cite{10.1145/3447928.3457355} uses Taylor series expansions and Lipschitz bounds to derive reach sets for nonlinear systems.
These approaches can, in principle, bound the reach set over an arbitrary time horizon, but the approximation error may increase very rapidly with time. Furthermore, these approaches struggle with complex dynamics.

Our goal in this paper is different and more modest: We establish an SOS polynomial whose sublevel set contains the reachable set in the sense of a \emph{probably approximately correct} (PAC) property. In particular, we consider the approximation of a single time step. This is sufficient for many of the applications considered above (as a first step in constructing barrier certificates, inductive invariants etc.), but in contrast to the approaches cited in the beginning of this section, it does not readily extend to extrapolating the reach set over longer time horizons (it would involve costly quantifier elimination).

One of the earliest data-driven approaches involving SOS polynomials was the construction of barrier certificates by \cite{prajna2006barrier}, e.g., to show that obstacles are avoided by a control system. The scalability was later improved by \cite{7402508}, but the optimisation problem remains somewhat challenging.
Approximating the reach set is related to approximating the support of a probability measure, as observed by \cite{devonport}. Recent work by \cite{lasserre2019empirical,CRMATH_2022__360_G8_919_0} suggests that Christoffel functions are particularly useful for approximating the support. 
Our work is heavily inspired by \cite{devonport}, who proposed to approximate the one-step reach set with an SOS polynomial that is the superlevel set of the Christoffel function.
The PAC guarantees provided by \cite{devonport} are derived from measure theory and are, in practise, somewhat conservative. Based on conformal prediction, we propose significant improvements that we outline below.
Further work on conformal prediction will be cited in the text. 

\paragraph{Contributions}
In this paper, we make the following contributions:
\begin{itemize}
\itemsep=0ex
\item 
We use conformal prediction to provide stronger and more sample-efficient guarantees on reach set approximation than those given by \cite{devonport}.
\item We propose a version of reach set approximation that is robust to outliers, in contrast to the approach of \cite{devonport}. 
\item We exploit an incremental form of the Christoffel function for transductive conformal prediction, thanks to which we don't need to split the data set into training and calibration sets.
\item To the best of our knowledge, this is the first use of the Christoffel function in conformal prediction. The particular properties of the Christoffel function in set and density approximation make it an excellent candidate for a nonconformity function.
\end{itemize}

\paragraph{Structure of the paper}
The 
paper is organized as follows. 
Section \ref{sec:Data-Driven SA with Christoffel Functions} presents the data-driven framework for reachability analysis using Christoffel functions. It describes  the theoretical developments related to the reach set approximation and to Christoffel functions. 
In Section \ref{Reach Set Approximation with Conformal Prediction}, we introduce our proposed approach to the 
reach set approximation with conformal prediction, whose statistical guarantees are presented in Section \ref{ssec:Statistical Guarantees}. Section \ref{ssec:Avoiding the Calibration Set} presents a technique to avoid the calibration set by using transductive conformal prediction and an incremental version of the Christoffel function. In Section \ref{sec:Robustness to Outliers}, we discuss the robustness of our methodology to outliers. 
Section \ref{sec:Experiments} provides numerical experiments on simulated data to support our theoretical results, and to highlight the effectiveness and potential of the proposed approach. 

\section{Data-driven Reach Set Approximation with Christoffel Functions}
\label{sec:Data-Driven SA with Christoffel Functions}

\label{ssec:Reach-Set-Approximation}

Reachability analysis aims to determine the possible future states of a dynamical system starting from a given initial state. For our purposes, we consider the system to be defined (explicitly or implicitly) by a transition function 
which maps a state $\Vec{x} \in \mathbb{R}^{n}$ to its successor state.
We forego extending the notation to nondeterministic or stochastic systems, since our focus is on estimating the image of $f$ applied to a set of initial states; in the case of a stochastic system we are interested in approximating the support of the image distribution.
%
Beginning with a given initial set of states $\set{I}$, we are interested in computing the \emph{reachable set}
$$\set{S}= \{ f(\Vec{x}) : \Vec{x} \in \set{I} \}.$$
 When $f$ is not precisely known or complex, obtaining the exact solution may not be possible or economical. Instead, we compute an approximation $\hat{\set{S}}$ that covers most of $\set{S}$. 
 Every set $S$ can be represented by a probability measure $\mu$ such as $S$ is the support of $\mu$. This motivated \cite{devonport} to use the Christoffel function to approximate the set $\set{S}$. 
 In the following subsection, we introduce the Christoffel function, its empirical counterpart, and discuss how to compute it.

\subsection{Preliminaries}
We start by introducing some mathematical notation. Given a vector $\Vec{x} \in \mathbb{R}^n$, we denote its elements as  
 $\Vec{x} = ({x}_1,..., {x}_n) $. An integer coefficient vector $\Vec{\alpha} = ({\alpha}_1,...,{\alpha}_n ) \in \mathbb{N}^n$ defines the monomial $\Vec{x}^{\Vec{\alpha}} = {x}_1^{{\alpha}_1}\times {x}_2^{{\alpha}_2}...\times {x}_n^{{\alpha}_n} $.
For $d \in \mathbb{N}$, we consider $\mathbb{R}[\Vec{X}]_d^n$ to be the vector space of n-variate polynomials whose degree is less or equal to $d$. 
With each coefficient vector $\Vec{\alpha} \in \mathbb{N}^n$, we associate the monomial $\Vec{x}^{\vec{\alpha}}$ whose degree is equal to $\| \Vec{\alpha} \|=\sum_{i=1}^{n}\Vec{\alpha}_i$. 
The monomials $\Vec{x}^{\Vec{\alpha}}$ with $\| \Vec{\alpha} \| \leq d $ form a canonical basis of $\mathbb{R}[\Vec{X}]_d^n$. 
We denote the number of monomials of degree less or equal to $d$ with $$s(d) = \binom{n+d}{n}.$$ 
Let $\Vec{v}_d(\Vec{x}) \in \mathbb{R}^{s(d)}$ be the vector of monomials of degree less or equal to d evaluated at $\Vec{x}$. 
For example, if $d= 2$ and $n = 2$, then $\Vec{v}_{d}(\Vec{x}) = [ 1\text{ } x_1\text{ } x_2\text{ } x_{1}x_{2}\text{ } x_{1}^2 \text{ }x_{2}^2 ] $.

\subsection{Christoffel Functions}
\label{Christoffel Functions}
Christoffel functions are a class of functions associated with a finite measure and a parameter degree $d \in \mathbb{N}$. They have a strong connection to approximation theory and in this section we briefly summarize some results by \cite{lasserre2019empirical}. 
For a finite measure $\mu$ on $\mathbb{R}^{n}$ and an integer degree $d$, the \emph{Christoffel function} $\Lambda_{\mu, d}( \Vec{x}):\mathbb{R}^n \mapsto \mathbb{R}$ is defined in terms of the \emph{moment matrix} of the measure $\mu$: 
$$
\mathbf{M}_d=\int_{\mathbb{R}^{n}} \Vec{v}_{d}(\Vec{x}) \Vec{v}_{d}(\Vec{x})^{\top} d \mu(\Vec{x}).$$

The moment matrix is semi-definite positive for all $d \in \mathbb{N}$. We furthermore assume that the matrix is positive definite, which ensures the invertibility of $M_d$.\footnote{In fact, the moment matrix of any finite measure $\mu$ is definite positive unless the support of $\mu$ is contained in the zeros of a polynomial; for a closer look at the moment matrix, we refer the reader to \cite{lasserre2019empirical}.} With the help of the moment matrix, the Christoffel function is defined as : 
\begin{equation}\label{eq:Christoffel}
\Lambda_{\mu, d}(\Vec{x}) = \Bigl(\mathbf{v}_{d}(\mathbf{\Vec{x}})^{T} \mathbf{M}_{d}^{-1} \mathbf{v}_{d}(\mathbf{\Vec{x}})\Bigr)^{-1}.
\end{equation}
The following alternative formulation of the Christoffel function can be useful when the moment matrix is large. It can be computed by solving a convex quadratic programming problem, which can be done efficiently using numerical techniques, even for high degrees $d$: 
$$ 
\Lambda_{\mu, d}( \Vec{x}) = \inf _{P \in \mathbb{R}[\Vec{X}]_d^n}\left\{\int_{\mathbb{R}^{n}} P(\mathbf{z})^{2} d \mu(\mathbf{z}), \quad \text {s.t.} \quad P(\mathbf{x})=1\right\}
$$

In a data-driven setting, the exact measure $\mu$ is unknown. One way to obtain information about $\mu$ is by sampling a set of points independently drawn from its distribution. For every $N \in \mathbb{N}$, when disposing of $N$ i.i.d samples $\{ \Vec{x}^{1}, \ldots, \Vec{x}^{N} \}$ from $\mu$, we approximate $\mu$ with the \emph{empirical measure} 
\[  \hat{\mu} = \tfrac{1}{N}{\sum}_{i=1}^N \delta_{\Vec{x}^i},\]
where $\delta_{\Vec{x}}$ is the Dirac measure.
The moment matrix $\widehat{\mathbf{M}}_d$ associated with the empirical measure $\hat{\mu}$ is 
\begin{equation}\label{eq:moment matrix}
\widehat{\mathbf{M}}_d = \tfrac{1}{N} {\sum}_{i=1}^{N}\mathbf{v}_{d}(\mathbf{x^{i}}) \mathbf{v}_{d}(\mathbf{x^{i}})^{T} 
\end{equation}
Therefore, the empirical measure $\hat{\mu} $ defines an empirical Christoffel function. Since we are only interested in superlevel sets of the Christoffel function, we can forego the inversion and instead work with sublevel sets of what we call the empirical \emph{Christoffel polynomial}: \begin{equation}\label{eq:empiricalChristoffel}
 \Lambda^{-1}_{\hat{\mu}, d}(\Vec{x}) = \mathbf{v}_{d}(\Vec{x})^{T} \widehat{\mathbf{M}_{d}}^{-1} \mathbf{v}_{d}(\Vec{x})
\end{equation}
Note that the moment matrix $\widehat{\mathbf{M}}_d$ is almost surely invertible if the number of samples $N \geq s(d)$.
The Christoffel polynomial is a sum-of-squares polynomial of degree $2d$. Consequently, it is nonnegative, and if $N > s(d)$, the empirical Christoffel polynomial is strictly positive.
Note that, for increasing sample size $N$, the empirical Christoffel function converges uniformly to the Christoffel function of the exact measure.

\subsection{Set Approximation with Christoffel Functions}
\label{ssec:Set Approximation with Christoffel Functions}
\cite{lasserre2019empirical} proposed various thresholding schemes for approximating the support of a probability measure using the Christoffel function or, more precisely, its empirical counterpart. This idea was applied by \cite{devonport} to approximate the reachable set $\set{S}$ with the superlevel sets of the Christoffel function. In this section, we will briefly summarize the approach. 

Let $\mu$ be the probability measure of the reachable set $\set{S}$. For a given degree $d \in \mathbb{N}$, the reachable set can be approximated with the sublevel set 
\begin{equation}\label{eq:cpls}
\hat{\set{S}} = \{ \Vec{x} \in \mathbb{R}^{n} \mid \Lambda^{-1}_{\mu, d}(\Vec{x}) \leq \alpha \}
\end{equation}
for some $\alpha \in \mathbb{R}$. However, since the exactly reachable set $\set{S}$ is unknown, $\mu$ is unknown. Instead, the Christoffel function $\Lambda_{\mu, d}$ is approximated by an empirical Christoffel function using i.i.d generated samples $\Vec{x^{i}}$ from $\set{S}$. We can obtain a conservative threshold $\alpha$ such that $\Vec{x^{i}} \subseteq \hat{\set{S}}$ by letting
\begin{equation}\label{eq:maxalpha}
\alpha = \max_i \Lambda^{-1}_{\hat{\mu}, d}(\Vec{x}^{i}).
\end{equation}Using methods from statistical learning theory, \cite{devonport} proposed the following PAC guarantees:
\begin{conjecture}[Thm.\;1 in \cite{devonport}]\label{thm:devonport_PAC}
Given a training set of i.i.d samples $\set{D} = \{ \Vec{x}^1, \ldots, \Vec{x}^N\}$ from $\set{S}$, let 
\begin{equation}\label{eq:maxalphaset}
\hat{\set{S}} = \{ \Vec{x} \in \mathbb{R}^{n} \mid \Lambda^{-1}_{\hat{\mu}, d}(\Vec{x}) \leq \max_i \Lambda^{-1}_{\hat{\mu}, d}(\Vec{x}^{i}) \}.
\end{equation}If $N \geq \frac{5}{\epsilon}\left(\log \frac{4}{\delta}+\binom{n+2d}{n} \log \frac{40}{\epsilon}\right)$, then $\mathbb{P}\Bigl( \mu \bigl( \hat{\set{S}} \bigr) \geq 1-\epsilon \Bigr) \geq 1-\delta.$
\end{conjecture}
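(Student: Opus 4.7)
The plan is to view this as a classical one-sided PAC bound in the \emph{consistent learning} (realizable) setting. The key observation is that, although $\hat{\set{S}}$ depends on the data in a complicated way, it lies in a structured hypothesis class of bounded VC dimension and, by its very construction, contains every training sample. It therefore plays the role of a consistent hypothesis, whose \emph{generalization error}, here equal to the $\mu$-mass of the complement $1 - \mu(\hat{\set{S}})$, can be bounded by a standard uniform-convergence argument.

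First I would identify the hypothesis class
\[\mathcal{H} \;=\; \bigl\{ \{\Vec{x} \in \mathbb{R}^n : Q(\Vec{x}) \leq 0\} : Q \in \mathbb{R}[\Vec{X}]_{2d}^n \bigr\}.\]
Since $\Lambda^{-1}_{\hat{\mu},d}$ is a polynomial of degree $2d$ and the space $\mathbb{R}[\Vec{X}]_{2d}^n$ contains the constants, the set $\hat{\set{S}} = \{\Lambda^{-1}_{\hat{\mu},d} - \tau^\star \leq 0\}$ with $\tau^\star = \max_i \Lambda^{-1}_{\hat{\mu},d}(\Vec{x}^i)$ is an element of $\mathcal{H}$, and it contains every training sample by the very definition of $\tau^\star$. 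Next I would bound the VC dimension of $\mathcal{H}$ using Dudley's theorem, which states that the VC dimension of the sublevel-set family of a $D$-dimensional real vector space of functions is at most $D$. Since $\dim \mathbb{R}[\Vec{X}]_{2d}^n = \binom{n+2d}{n}$, this yields $\mathrm{VCdim}(\mathcal{H}) \leq \binom{n+2d}{n}$.

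The final step is to invoke a standard VC uniform-convergence theorem in the realizable setting (as in, e.g., Vidyasagar's \emph{Learning and Generalization}): for a class of sets with VC dimension $V$, whenever
\[N \;\geq\; \frac{5}{\epsilon}\!\left(\log\frac{4}{\delta} + V \log \frac{40}{\epsilon}\right),\]
with probability at least $1-\delta$ every element of the class that contains all $N$ i.i.d.\ training samples has $\mu$-measure at least $1-\epsilon$. Instantiating this at $V = \binom{n+2d}{n}$ and applying it to $\hat{\set{S}} \in \mathcal{H}$ immediately yields the claim.

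The main obstacle is conceptual rather than calculational: $\hat{\set{S}}$ is a random set that depends on all $N$ samples through the empirical moment matrix $\widehat{\mathbf{M}}_d$, so one cannot bound its coverage by a simple Hoeffding-type argument applied to a fixed set. The remedy is to pay a uniform-convergence price over the entire class $\mathcal{H}$; once this structural reduction is in place, the remainder of the argument is a routine application of a VC-dimension PAC bound whose specific constants ($5$, $4$, $40$) reproduce the sample-complexity formula stated in the theorem.
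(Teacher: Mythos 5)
The first thing to check here is that the paper does not actually prove this statement: it deliberately downgrades Thm.~1 of \cite{devonport} to a \emph{conjecture}, on the grounds that the original argument ``neglects the dependencies between the empirical Christoffel polynomial and the points used to construct the threshold $\alpha$,'' and the whole point of Sections~\ref{Reach Set Approximation with Conformal Prediction}--\ref{ssec:Statistical Guarantees} is to replace it with conformal guarantees. So there is no in-paper proof to compare against; what you have written is essentially a reconstruction of Devonport et al.'s own argument: sublevel sets of degree-$2d$ polynomials form a class of VC dimension at most $\binom{n+2d}{n}$ by Dudley's theorem, $\hat{\set{S}}$ belongs to that class and contains every sample by construction of the threshold, and the realizable-case sample-complexity bound with constants $5$, $4$, $40$ (Vidyasagar) finishes the job.

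On the merits, your outline is coherent, and notably the paper's stated objection does not identify a gap in it: the uniform-convergence step is precisely the device that absorbs the data-dependence of $\hat{\set{S}}$, because the guarantee holds simultaneously for \emph{every} set in $\mathcal{H}$ containing all $N$ samples, hence in particular for the data-selected one. Two points you should make explicit to be airtight: (i) $\hat{\set{S}}\in\mathcal{H}$ presupposes that $\widehat{\mathbf{M}}_d$ is invertible, which holds only almost surely and only when $N\geq s(d)$ and $\mu$ is not supported on an algebraic variety (the stated sample-size condition does force $N\geq\binom{n+2d}{n}\geq s(d)$, but the degenerate case deserves a sentence); (ii) the specific constants are inherited wholesale from the theorem you cite, so your proof is only as strong as that citation. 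The comparison with the paper's replacement result (Theorem~\ref{thm:calibrated_sublevels}) is instructive: your VC route needs no calibration split but pays a factor of roughly $\binom{n+2d}{n}\log\frac{1}{\epsilon}$ in sample complexity, which Figure~\ref{fig:figure_1} shows to be extremely conservative for large $d$, whereas the split-conformal bound is independent of $n$ and $d$ at the cost of reserving part of the data for calibration.
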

In other words, if $N,\delta,\epsilon$ satisfy the condition in Conjecture\;\ref{thm:devonport_PAC}, then with probability bigger than $1-\delta$ we are sure that $\hat{\set{S}}$ contains more than $1-\epsilon$ of the mass of $\set{S}$. 
However, we believe this result neglects the dependencies between the empirical Christoffel polynomial and the points used to construct the threshold $\alpha$. As will be discussed in more detail in \sectionref{Reach Set Approximation with Conformal Prediction}, different samples should be used for  constructing the empirical Christoffel polynomial and for constructing the threshold $\alpha$ to ensure independence. 

We informally note convergence results by \cite{lasserre2019empirical}, which hold for uniform probability measures (and some generalizations):
\begin{itemize}
\itemsep=0ex
    \item As $d\rightarrow \infty$ and with an appropriately chosen threshold, the sublevel set of the (non-empirical) Christoffel polynomial converges to the support of the measure, i.e., to the exact reach set in these sense of a Hausdorff distance.
    \item For fixed $d$ and $n\rightarrow \infty$, the empirical Christoffel function converges uniformly to the Christoffel function.
    \item For fixed $d$ and $n\rightarrow \infty$, the border of the empirical Christoffel polynomial converges to the border of the Christoffel polynomial in these sense of a Hausdorff distance. 
\end{itemize}
In consequence, we can informally expect that for a large enough degree $d$ and large enough sample size $N$, the sublevel sets of the Christoffel polynomial are close enough to the reachable set.

We will use the following running example throughout the paper to illustrate the different concepts. 
\begin{example}[Four squares]\label{ex:foursquares}
Let the transition function $f : \mathbb{R}^2 \rightarrow \mathbb{R}^2$ be $$f(x,y) = (1+sign(x)\cdot x^2,1+sign(y)\cdot y^2)$$
and let the initial set be $\set{I} = [-1,1]^2$. The reachable set consists of four squares, i.e., $$\set{S}= [-3,-1]^2 \cup [-3,-1]\times [1,3] \cup [1,3]\times [-3,-1] \cup [1,3]^2.$$
Figure\;\ref{fig:figure_1} shows the reach set approximation given by \eqref{eq:maxalphaset}, for a sample of size $N=10\,000$ and different degrees $d$.
The caption includes the corresponding uncertainty bound $\varepsilon$ for confidence $1-\delta = 0.99$ obtained by Conjecture\;\ref{thm:devonport_PAC}.

We observe that, as intended by construction, all samples are included in $\hat S$. For increasing degrees, $\hat S$ becomes more precise. However, the uncertainty in the covered probability mass $\epsilon$, increases substantially. Indeed, the bound $\epsilon$ seems rather conservative since, in all instances, $\hat S$ covers nearly $100\%$ of $S$.
\end{example}
 \begin{figure}[t!bp]
\floatconts
  {fig:figure_1}
  {\caption{Reach set approximation $\hat S$ for Example \ref{ex:foursquares}, using the sublevel set of the empirical Christoffel polynomial in \eqref{eq:maxalphaset} (purple outline) on a sample of size $N=10000$ (black dots), for different degrees $d$ and corresponding uncertainty bound $\varepsilon$, according to Conjecture\;\ref{thm:devonport_PAC}.}}
  {%
    \subfigure[$d=3$, $\varepsilon=0.085$]{\label{fig:image-a-pac}%
      \includegraphics[width=0.245\linewidth]{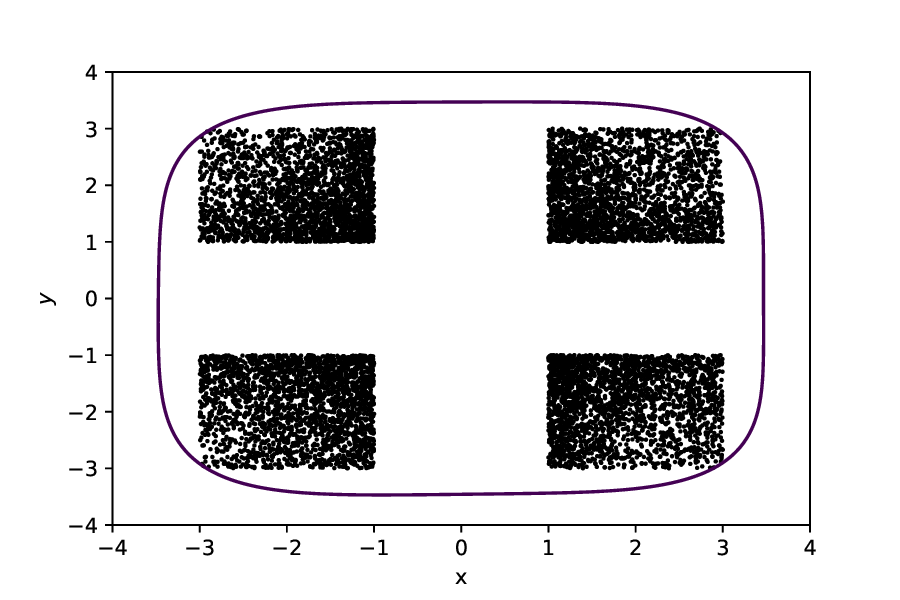}}%
    \subfigure[$d=6$, $\varepsilon=0.23$]{\label{fig:image-b-pac}%
      \includegraphics[width=0.245\linewidth]{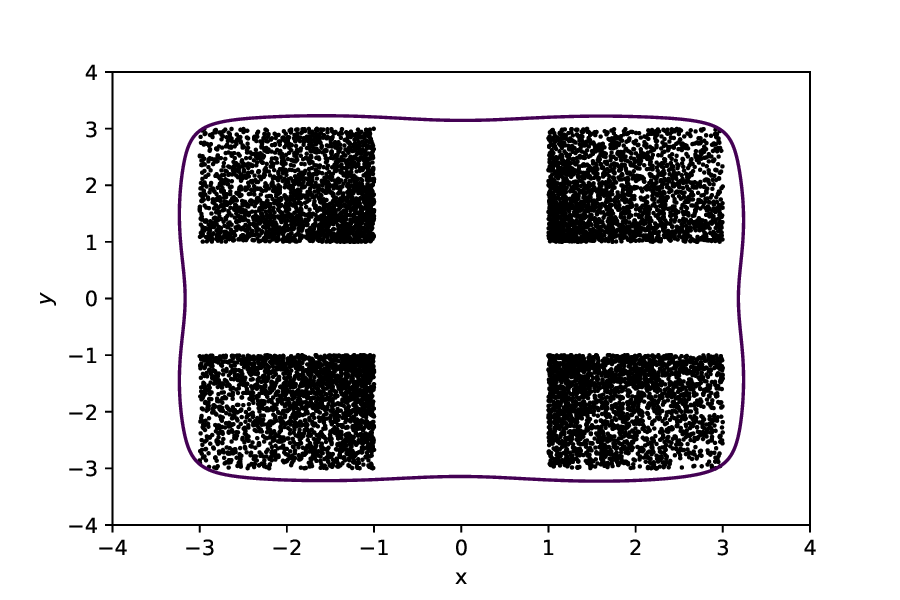}}    
    \subfigure[$d=10$, $\varepsilon=0.51$]{\label{fig:image-c-pac}%
      \includegraphics[width=0.245\linewidth]{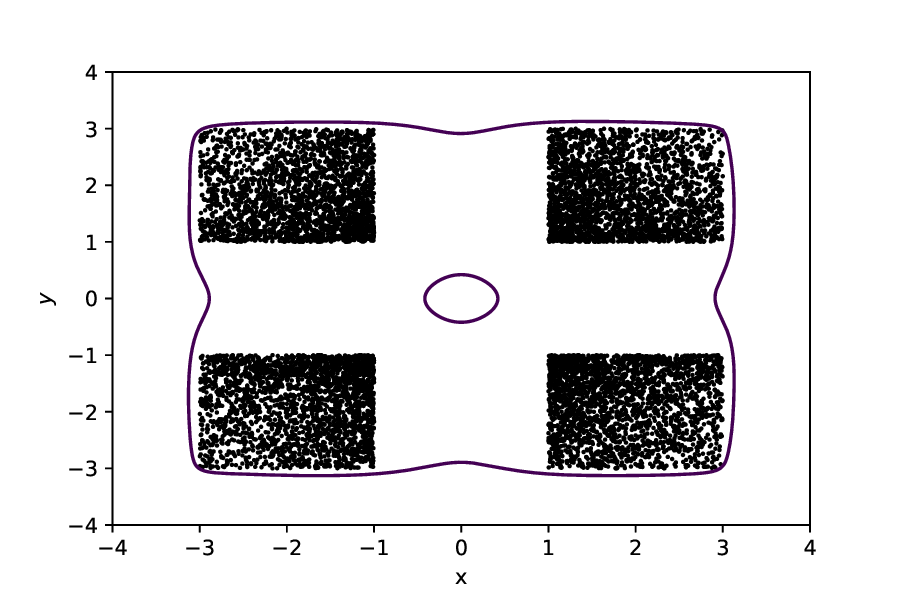}}   
    \subfigure[$d=15$, $\varepsilon=0.9$]{\label{fig:image-d-pac}%
      \includegraphics[width=0.245\linewidth]{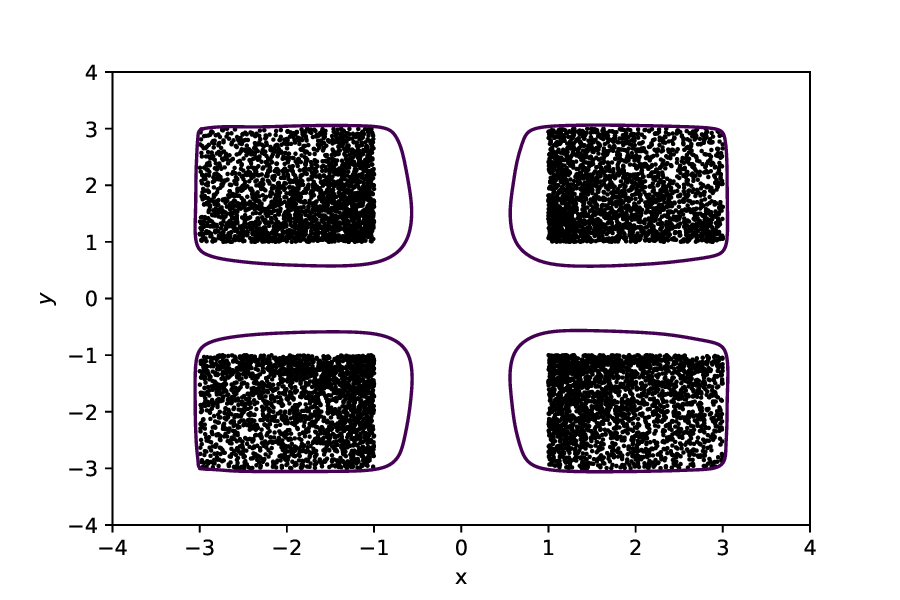}}
  }
\end{figure}
%
%
%

%
%
\section{Reach Set Approximation with Conformal Prediction}\label{Reach Set Approximation with Conformal Prediction}
Following the reasoning of \sectionref{ssec:Reach-Set-Approximation}, we can expect a sublevel of the Christoffel polynomial to converge to the support of the distribution. Intuitively, the Christoffel polynomial takes high values where the density is low and low values where the density is high, which makes it a good candidate for a nonconformity function.

In this section, we briefly recall relevant results from conformal prediction and instantiate them to the special case of estimating the support of distribution, which in our setting is equivalent to approximating the reach set $\set{S}$.
Let $r: \mathbb{R}^n \rightarrow \mathbb{R}$ be a non-conformity function. 
Given a sample $\set{D} = \bigl\{ \Vec{x}^{1},\ldots, \Vec{x}^{N}\bigr\}$,
the $p$-value at $\Vec{x}$ is 
\[  p_{value}(\Vec{x})=\tfrac{1}{N}\Bigl|\bigl\{\ i  \bigm| r(\Vec{x}^i) \geq r(\Vec{x})\;\bigr\}\Bigr| \]
For $i \in \{0,..., N \}$, the conformal region is defined as
\[ 
C_{\set{D}}^{\frac{i}{N}}=\Bigl\{ \Vec{x} \in \mathbb{R}^n \Bigm| p_{value}(\Vec{x}) \geq \tfrac{i}{N}\Bigr\}
  \]
According to conformal prediction theory, see \cite{vovk,Anastasios}, a new i.i.d sample $\Vec{x}^{N+1}$ satisfies 
\begin{equation}\label{eq:Conformal prediction}
\mathbb{P}\left(\Vec{x}^{N+1} \in C_{\set{D}}^{\frac{i}{N}}\right) \geq 1-\tfrac{i+1}{N+1}.
\end{equation}
Note that in \eqref{eq:Conformal prediction}, the set $\set{D}$ is also subject to randomness. In other words, \eqref{eq:Conformal prediction} stands on average only if the set $\set{D}$ is re-sampled for each $\Vec{x}^{N+1}$. However, in reachability analysis and data-driven applications more generally, we may be restricted to a single, fixed data set $\set{D}$.  
Therefore, we need to take into account the probability on the left hand side of \eqref{eq:Conformal prediction}, conditioned on the sample $\set{D}$.

\subsection{Statistical Guarantees}
\label{ssec:Statistical Guarantees}
In this section, we ensure statistical independence between the nonconformity function $r$ and the set $\set{D}$
by splitting it into a training set $\set{D}_\mathrm{train}$ and a calibration set $\set{D}_\mathrm{cal}$. %
The use of distinct sets of samples from the same measurement (i.e., a training set and a calibration set) is essential to ensure the independence of the samples used for computing the p-values and conformal regions from the nonconformity function, which is computed based on the training set, see  \cite{Anastasios} and \cite{bates}. This is a special case of conformal prediction called split conformal prediction or inductive conformal prediction. The computational advantage of this method lies in its requirement to fit the model only once. However, this comes at the cost of statistical efficiency as the method necessitates the division of the data into separate, and therefore smaller, training and calibration data sets. A smaller calibration set increases the coverage error, while a smaller training set reduces the tightness of the approximation. An alternative to this trade-off will be examined in section \ref{ssec:Avoiding the Calibration Set}.
Here, we use the training set for computing the empirical Christoffel polynomial, while the calibration set is used to compute the conformal region. This will lead to bounds on the conditional probability
$$ \mathbb{P}\Bigl(\Vec{x}^{N+1} \in C_{\set{D}}^{\frac{i}{N}}\ \Bigm| \set{D}_\mathrm{cal} \Bigr).$$ 
Note that the theorems in this section 
apply to any choice of nonconformity function.
The following theorem 
provides PAC guarantees for conformal regions that are defined with suitably chosen probability thresholds $b_1,\ldots,b_n$.
We will afterwards propose values for $b_1,\ldots,b_n$ that correspond to the special case of set approximation.
 \begin{theorem}[Thm.\;4 from \cite{bates}]\label{thm:calibration}
Consider $N$ uniform random samples $U_1, \ldots, U_N \stackrel{\text { i.i.d. }}{\sim} \operatorname{Unif}\bigl([0,1]\bigr)$, with order statistics $U_{(1)} \leq U_{(2)} \leq \ldots \leq U_{(N)}$, and fix any $\delta \in(0,1)$. Suppose $0 \leq b_1 \leq b_2 \leq \ldots \leq b_N \leq 1$ are reals such that 
$$\mathbb{P}\left[U_{(1)} \leq b_1, \ldots, U_{(N)} \leq b_n\right] \geq 1-\delta. $$
Let also $b_0=0$. 
Then for any i.i.d vector $\Vec{x}$ sampled from $\mu$:
\begin{equation}\label{eq:bates}
    \mathbb{P}\left[\mathbb{P}\Bigl( \Vec{x} \in C_{\set{D}_\mathrm{cal}}^{\frac{i}{N}} \Bigm| \set{D}_\mathrm{cal}\Bigr) \geq 1 - b_i \right] \geq 1-\delta
\end{equation}
 \end{theorem}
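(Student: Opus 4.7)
The plan is to recognize that the conditional coverage probability $\mathbb{P}(\Vec{x} \in C^{i/N}_{\set{D}_\mathrm{cal}} \mid \set{D}_\mathrm{cal})$ is precisely a uniform order statistic in disguise, so that the hypothesis on $b_1,\ldots,b_N$ plugs in directly.

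First I would rewrite the conformal region in terms of order statistics of the calibration scores. Unrolling the definition of the $p$-value, $\Vec{x} \in C^{i/N}_{\set{D}_\mathrm{cal}}$ iff at least $i$ of the scores $r(\Vec{x}^1),\ldots,r(\Vec{x}^N)$ are $\geq r(\Vec{x})$, which (assuming a continuous score distribution, to avoid ties) is equivalent to $r(\Vec{x}) \leq r(\Vec{x}^{(N-i+1)})$, the $(N-i+1)$-th smallest calibration score. Conditionally on $\set{D}_\mathrm{cal}$, the new sample $\Vec{x}$ is independent of the calibration data, so
\[ \mathbb{P}\bigl(\Vec{x} \in C^{i/N}_{\set{D}_\mathrm{cal}} \bigm| \set{D}_\mathrm{cal}\bigr) \;=\; F_r\bigl(r(\Vec{x}^{(N-i+1)})\bigr), \]
where $F_r$ is the CDF of $r(\Vec{x})$ under $\mu$.

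Next I would apply the probability integral transform in the reverse direction tailored to the statement. Define $U_j := 1 - F_r(r(\Vec{x}^j))$; by continuity of $F_r$, these are i.i.d.\ $\operatorname{Unif}([0,1])$. Since $F_r$ is nondecreasing, the reverse transform reverses the order of the order statistics, giving $U_{(i)} = 1 - F_r\bigl(r(\Vec{x}^{(N-i+1)})\bigr)$. Substituting back,
\[ \mathbb{P}\bigl(\Vec{x} \in C^{i/N}_{\set{D}_\mathrm{cal}} \bigm| \set{D}_\mathrm{cal}\bigr) \;=\; 1 - U_{(i)}. \]
Hence the event $\{\mathbb{P}(\Vec{x} \in C^{i/N}_{\set{D}_\mathrm{cal}} \mid \set{D}_\mathrm{cal}) \geq 1 - b_i\}$ is exactly the event $\{U_{(i)} \leq b_i\}$. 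The joint hypothesis $\mathbb{P}[U_{(1)} \leq b_1,\ldots,U_{(N)} \leq b_N] \geq 1 - \delta$ implies in particular that each marginal event $\{U_{(i)} \leq b_i\}$ has probability at least $1-\delta$ by monotonicity of measure, which is the desired inequality \eqref{eq:bates}.

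The main obstacle will be bookkeeping the directions: high nonconformity scores correspond to low $p$-values and hence to small order-statistic indices, so one must be careful to choose the transform ($U_j = 1 - F_r(r(\Vec{x}^j))$ rather than $F_r(r(\Vec{x}^j))$) so that the index $i$ on both sides matches. A secondary technical issue is handling ties when $F_r$ has atoms. A clean way around this is to break ties by adding an independent $\operatorname{Unif}([0,\varepsilon])$ perturbation to every score (including the test point), which preserves exchangeability and makes the scores almost surely distinct; one then takes $\varepsilon \to 0$, or equivalently works directly with the tie-broken $p$-value common in the conformal prediction literature. Apart from these directional/continuity details, the argument is a one-line application of the hypothesis once the identification $\mathbb{P}(\Vec{x} \in C^{i/N} \mid \set{D}_\mathrm{cal}) = 1 - U_{(i)}$ is established.
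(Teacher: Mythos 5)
The paper does not prove this statement: it is imported verbatim as Theorem~4 of the cited reference and used as a black box, so there is no in-paper proof to compare against. Your reconstruction is correct and is essentially the standard argument from that reference: the identification $\mathbb{P}\bigl(\Vec{x}\in C^{i/N}_{\set{D}_\mathrm{cal}}\bigm|\set{D}_\mathrm{cal}\bigr)=F_r\bigl(r(\Vec{x}^{(N-i+1)})\bigr)=1-U_{(i)}$ is the whole content of the theorem, your index bookkeeping (the $i$-th largest score being the $(N-i+1)$-th order statistic, and the reversal under $u\mapsto 1-u$) is right, and the conclusion then follows from the hypothesis by monotonicity of measure (or directly, if one reads \eqref{eq:bates} as simultaneous over $i$). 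Your caveat about ties is well taken: as stated the theorem omits the continuity assumption needed for $F_r(r(\Vec{x}^j))$ to be exactly uniform, though in the atomic case $F_r(r(\Vec{x}^j))$ stochastically dominates $\operatorname{Unif}([0,1])$, so the lower bound survives without randomized tie-breaking; in the paper's application the nonconformity score is the (continuous) Christoffel polynomial, so the issue does not arise.
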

 %
%
We propose an analogous, theorem to bound the conditional probability from above.
\begin{theorem}\label{thm:Tebjou}
Under the assumptions of Thm.\;\ref{thm:calibration}, 
suppose further that the nonconformity function $r(\Vec{x})$ is continuous, the measure $\mu$ is continuous, and that 
$\alpha$ is a real such that 
$$\mathbb{P}\bigl(U_{(N)} \leq \alpha \bigr) \geq 1-\delta. $$ 
Then for any i.i.d vector $\Vec{x}$ sampled from $\mu$:
\begin{equation}\label{eq:tebjou}
    \mathbb{P}\left[\mathbb{P}\Bigl( \Vec{x} \in C_{\set{D}_\mathrm{cal}}^{\frac{1}{N}} \Bigm| \set{D}_\mathrm{cal}\Bigr)  \leq \alpha \right] \geq 1-\delta
\end{equation}
\end{theorem}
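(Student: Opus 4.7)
The plan is to show that the inner conditional probability on the left-hand side of \eqref{eq:tebjou}, viewed as a random function of $\set{D}_\mathrm{cal}$, has exactly the distribution of the maximum order statistic $U_{(N)}$ of $N$ i.i.d.\ uniforms on $[0,1]$. Once this identification is in hand, the hypothesis $\mathbb{P}(U_{(N)} \leq \alpha) \geq 1-\delta$ transports directly into the desired conclusion.

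First, I would unpack the conformal region: by the definitions of $p_{value}$ and $C^{i/N}$, $\Vec{x} \in C_{\set{D}_\mathrm{cal}}^{1/N}$ iff at least one calibration point $\Vec{x}^i$ satisfies $r(\Vec{x}^i) \geq r(\Vec{x})$, equivalently $r(\Vec{x}) \leq \max_{1 \leq i \leq N} r(\Vec{x}^i)$. The continuity of $r$ and $\mu$ implies that $r(\Vec{x}) = r(\Vec{x}^i)$ occurs with probability zero, so there is no tie-breaking issue between strict and non-strict inequalities.

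Next, let $F(t) = \mathbb{P}_{\Vec{x}\sim\mu}\bigl(r(\Vec{x}) \leq t\bigr)$; this CDF is continuous under the stated assumptions. Conditioning on $\set{D}_\mathrm{cal}$ and using the monotonicity of $F$,
$$\mathbb{P}\Bigl(\Vec{x} \in C_{\set{D}_\mathrm{cal}}^{1/N} \Bigm| \set{D}_\mathrm{cal}\Bigr) = F\Bigl(\max_i r(\Vec{x}^i)\Bigr) = \max_i F\bigl(r(\Vec{x}^i)\bigr).$$
Setting $U_i := F(r(\Vec{x}^i))$, the probability integral transform gives that $U_1,\ldots,U_N$ are i.i.d.\ $\mathrm{Unif}([0,1])$, and $\max_i U_i$ is exactly the order statistic $U_{(N)}$ appearing in the hypothesis. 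The event $\bigl\{\mathbb{P}(\Vec{x}\in C_{\set{D}_\mathrm{cal}}^{1/N} \mid \set{D}_\mathrm{cal}) \leq \alpha\bigr\}$ therefore coincides almost surely with $\{U_{(N)} \leq \alpha\}$, whose probability is at least $1-\delta$ by assumption, yielding \eqref{eq:tebjou}.

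I expect no serious obstacles: the whole argument is essentially a probability integral transform reduction. The only place where care is required is in verifying that the continuity hypotheses really are enough to make the PIT step valid (ensuring $F$ is continuous) and to neutralize tie-breaking; this is precisely why these assumptions, which are absent from Thm.\;\ref{thm:calibration}, had to be added here to obtain a matching upper bound.
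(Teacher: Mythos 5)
Your proposal is correct and follows essentially the same route as the paper's own proof: both identify the conditional coverage $\mathbb{P}(\Vec{x}\in C_{\set{D}_\mathrm{cal}}^{1/N}\mid \set{D}_\mathrm{cal})$ with $F\bigl(\max_i r(\Vec{x}^i)\bigr)$ and then use the probability integral transform to recognize this as the maximum order statistic $U_{(N)}$ of i.i.d.\ uniforms, so the hypothesis on $\alpha$ transfers directly. Your write-up is in fact slightly more careful than the paper's (explicitly unpacking the definition of $C^{1/N}$ and noting that ties are a null event), but the underlying argument is identical.
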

\begin{proof}
%
Under the assumptions, $r(\Vec{x})$ has a continuous distribution.
Let $F_{\mu}$ be the cumulative distribution function of $r(\Vec{x})$. Since $r(\Vec{x})$ has a continuous distribution, $F_{\mu}(r(\Vec{x}))$ follows $\operatorname{Unif}([0,1])$ and $F_{\mu}(r(\Vec{x}^1)),F_{\mu}(r(\Vec{x}^2)),...,F_{\mu}(r(\Vec{x}^N))$ all follow $\operatorname{Unif}([0,1])$.
Without loss of generality, we assume $r(\Vec{x}^1) \leq r(\Vec{x}^2) \leq ... \leq r(\Vec{x}^N)$. Letting $U_N = F_{\mu}(r(\Vec{x}^N))$, we obtain 
   $$\mathbb{P}\Bigl[ F_{\mu}(r(\Vec{x}^N)) \leq \alpha \Bigr] \geq 1-\delta.$$ 
    Considering $\Vec{x}$ sampled i.i.d from $\mu$, we get
    \[ \mathbb{P}\Bigl( \Vec{x} \in C_{\set{D}_\mathrm{cal}}^{\frac{1}{N}} \Bigm| \set{D}_\mathrm{cal}\Bigr) = \mathbb{P}\Bigl(r(\Vec{x}) \leq r(\Vec{x}^N) \Bigm| \set{D}_\mathrm{cal}\Bigr) = F_{\mu}\Bigl(r(\Vec{x}^N)\Bigr) \] 
    Combining the latter two results, we obtain \eqref{eq:tebjou}.
\end{proof}
We now use the results of Thm.\;\ref{thm:calibration} and Thm.\;\ref{thm:Tebjou}
to provide a guarantee on the accuracy of approximated reachable set $\hat{\set{S}}$ in Algorithm 1. Note that  Thm.\;\ref{thm:Tebjou} requires the nonconformity function to be continuous, which is the case for the empirical Christoffel polynomial.
\begin{theorem}\label{thm:calibrated_sublevels}
Suppose that the nonconformity function $r(\Vec{x})$ is continuous. $\forall \delta \in (0,1),$
\begin{equation}\label{eq:final1}
\mathbb{P}\left[  \mu \left(   C_{\set{D}_\mathrm{cal}}^{\frac{1}{N}} \right) \geq  \exp \left(\frac{ \log (\delta)}{N}\right) \right] \geq 1-\delta,
\end{equation}
If the measure $\mu$ is continuous, then
\begin{equation}\label{eq:final2}
\mathbb{P}\left[  \exp \left(\frac{ \log ( 1 - \delta)}{N}\right)  \geq \mu \left(   C_{\set{D}_\mathrm{cal}}^{\frac{1}{N}}   \right)  \right] \geq 1-\delta  ,
\end{equation}
Combining these results, we obtain $\forall \delta \in (0,\sfrac{1}{2})$:
\begin{equation}\label{eq:final3}
    \mathbb{P}\left[  \exp \left(\frac{ \log ( 1 - \delta)}{N}\right)  \geq \mu \left(   C_{\set{D}_\mathrm{cal}}^{\frac{1}{N}}   \right) \geq  \exp \left(\frac{ \log (\delta)}{N}\right) \right] \geq 1-2\delta. 
\end{equation}

\end{theorem}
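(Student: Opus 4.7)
The plan is to derive the two sides of \eqref{eq:final3} by specializing Thm.~\ref{thm:calibration} and Thm.~\ref{thm:Tebjou} respectively, with the key observation that for an i.i.d.\ sample $\vec{x} \sim \mu$ one has the identity $\mathbb{P}(\vec{x} \in C_{\set{D}_\mathrm{cal}}^{1/N} \mid \set{D}_\mathrm{cal}) = \mu(C_{\set{D}_\mathrm{cal}}^{1/N})$, so that both theorems become statements about $\mu$ of the conformal region.

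For the lower bound \eqref{eq:final1}, I would invoke Thm.~\ref{thm:calibration} with $i = 1$ and the choice $b_1 = 1 - \delta^{1/N}$, $b_2 = \cdots = b_N = 1$. The trick is that with this choice, the joint event $\{U_{(1)} \leq b_1, U_{(2)} \leq 1, \ldots, U_{(N)} \leq 1\}$ reduces to $\{U_{(1)} \leq b_1\}$, whose probability is $1 - (1-b_1)^N = 1 - \delta$ exactly, so the hypothesis of Thm.~\ref{thm:calibration} is met. The conclusion $1 - b_1 = \delta^{1/N} = \exp(\log(\delta)/N)$ is precisely the bound of \eqref{eq:final1}.

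For the upper bound \eqref{eq:final2}, I would invoke Thm.~\ref{thm:Tebjou} with $\alpha = (1-\delta)^{1/N}$. Here one uses $\mathbb{P}(U_{(N)} \leq \alpha) = \alpha^N = 1-\delta$, which meets the hypothesis of Thm.~\ref{thm:Tebjou}. The conclusion then reads $\mathbb{P}(\mu(C_{\set{D}_\mathrm{cal}}^{1/N}) \leq (1-\delta)^{1/N}) \geq 1 - \delta$, which is \eqref{eq:final2} after rewriting $(1-\delta)^{1/N} = \exp(\log(1-\delta)/N)$. The continuity assumptions on $r$ and $\mu$ (carried over from the hypothesis of Thm.~\ref{thm:Tebjou}) are used here.

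For \eqref{eq:final3}, I would simply apply a union bound: each of the two good events has probability at least $1-\delta$, so their intersection, which is exactly the two-sided bound, holds with probability at least $1 - 2\delta$; the restriction $\delta \in (0, \sfrac{1}{2})$ is just what makes the bound informative. There is essentially no hard step in this proof; the only delicate point is picking the order-statistic bounds $b_i$ and $\alpha$ so that the tail probabilities of $U_{(1)}$ and $U_{(N)}$ give the clean exponential expressions that appear in the statement, and being careful that the event in Thm.~\ref{thm:calibration} requires all $N$ order statistics to be bounded simultaneously (which is why padding $b_2,\ldots,b_N$ with $1$ is legitimate).
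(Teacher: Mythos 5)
Your proposal is correct and follows essentially the same route as the paper's proof: the same choice $b_1 = 1-\delta^{1/N}$, $b_2=\cdots=b_N=1$ in Thm.~\ref{thm:calibration}, the same $\alpha=(1-\delta)^{1/N}$ in Thm.~\ref{thm:Tebjou}, the same identification of the conditional probability with $\mu(C_{\set{D}_\mathrm{cal}}^{1/N})$, and the same union bound for \eqref{eq:final3}.
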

\begin{proof}
We instantiate \theoremref{thm:calibration} for a particular choice of $b_1 \ldots, b_N$. Since we are interested in the support of the measure, we take $b_1$ as the smallest possible value and set the other values $b_2 \ldots, b_N = 1$.
To satisfy the conditions of \theoremref{thm:calibration}, we first show the following intermediate result:
Let $U_1, \ldots, U_N \stackrel{\text { i.i.d. }}{\sim} \operatorname{Unif}([0,1])$, with order statistics $U_{(1)} \leq U_{(2)} \leq \ldots \leq U_{(N)}$.
Fixing $b_1 = 1- \delta^{\frac{ 1}{N}} $ and $b_2 = ....= b_N = 1$, it is straightforward that
$$ \mathbb{P}\left(U_{(1)} \leq b_1, \ldots, U_{N} \leq b_N\right) = \mathbb{P}\left(U_{(1)} \leq b_1 \right) = 1-\mathbb{P}\left(U_{(1)} \geq b_1 \right).$$
Since $U_{(1)}$ is the smallest of the random variables, $U_1, \ldots, U_N$, $\mathbb{P}\left(U_{(1)} \geq b_1 \right)$ is equivalent to all of the $U_i$ being greater or equal to $b_1$: 
%
%
$$1-\mathbb{P}\left(U_{(1)} \geq b_1 \right) = 1 -  \Pi_{i=1}^N \mathbb{P}\left(U_{i} \geq b_1 \right) = 1- (1- b_1)^N = 1-\delta.$$
Applying the above in \theoremref{thm:calibration}, we obtain 
\[ \mathbb{P}\left[\mathbb{P}\Bigl( \Vec{x} \in C_{\set{D}_\mathrm{cal}}^{\frac{1}{N}} \Bigm| \set{D}_\mathrm{cal}\Bigr) \geq \exp \left(\frac{ \log (\delta)}{N}\right) \right] \geq 1-\delta.\]
As $ \mu \left(   C_{\set{D}_\mathrm{cal}}^{\frac{1}{N}}   \right) = \mathbb{P}\left[ \Vec{x} \in C_{\set{D}_\mathrm{cal}}^{\frac{1}{N}} \mid \set{D}_\mathrm{cal}\right]$ we obtain the result in \eqref{eq:final1}.
%
%

Fixing $\alpha = \exp \left(\frac{ \log ( 1 - \delta)}{N}\right)$, we have $ \mathbb{P}\left[U_{N} \leq \alpha \right] = \alpha^N =1-\delta$, since $U_{(N)} \leq \alpha$ means all $U_{i}$ have to be lower than $\alpha$.
Substituting the above value of $\alpha$ in \theoremref{thm:Tebjou}, we obtain the result in \eqref{eq:final2}.
%
%
Combing \eqref{eq:final1} and \eqref{eq:final2}, we obtain the result in \eqref{eq:final3}.
%

\end{proof}

\begin{algorithm2e}
\caption{Reach set approximation (without outliers)}
\label{alg:Reachability analysis without outliers}

\KwIn{
An i.i.d data sample 
$\set{D} = \{\Vec{x}^{1},\ldots, \Vec{x}^{M}\}$, drawn from the reach set $\set{S}=f(\set{I})$, 
the degree $d$, the size $N$ of the calibration set with $N<M$}
\KwOut{$\epsilon$-accurate approximation $\hat{\set{S}}$ of $\set{S}$ with confidence $1-\delta$ and coverage error $\epsilon = 1- \delta^{\sfrac{1}{N}}$
}

   
\begin{enumerate}
  \itemsep=0ex
  \item[\texttt{\#}] Construct the training set of $M-N$ samples and the calibration set of $N$ samples:  
  $\set{D}_\mathrm{train} = \{\Vec{x}^{N+1},\ldots, \Vec{x}^{M}\}$ 
      and $\set{D}_\mathrm{cal} = \{ \Vec{x}^{1},\ldots, \Vec{x}^{N}\}$ 
  \item Compute the empirical moment matrix $\widehat{\mathbf{M}}_d$ and its inverse 
    \begin{enumerate}
      \itemsep=0ex
  \item $ \widehat{\mathbf{M}}_d =\frac{1}{M-N} \sum_{i=N+1}^{M} \mathbf{v}_{d}\left(\Vec{x}^{i}\right) \mathbf{v}_{d}\left(\Vec{x}^{i}\right)^{\top}$, with $\Vec{x}^{i}\in \set{D}_\mathrm{train}$
      \item Compute $\widehat{\mathbf{M}}^{-1}_d$.
       \end{enumerate}

     \item Calculate the threshold $\alpha$: $ \alpha =\max _{i=1, \ldots, N} \mathbf{v}_{d}\left(\Vec{x}^{i}\right)^{\top} {\widehat{\mathbf{M}}}^{-1}_d  \mathbf{v}_{d}\left(\Vec{x}^{i}\right)$, with $\Vec{x}^{i}\in \set{D}_\mathrm{cal}$
 \item Given the returned  
 ${\widehat{\mathbf{M}}}^{-1}_d$ and 
 $\alpha$, record 
the conformal region: 
 \[C_{\set{D}_\mathrm{cal}}^{\frac{1}{N}}= \hat{\set{S}}  = \Bigl\{\Vec{x} \in \mathbb{R}^{n} \Bigm| \mathbf{v}_{d}(\Vec{x})^{\top} {\widehat{\mathbf{M}}}^{-1}_d  \mathbf{v}_{d}(\Vec{x}) \leq \alpha  \Bigr\}\]
\end{enumerate}
\end{algorithm2e}

\begin{example}\label{ex:example2}
We illustrate \algorithmref{alg:Reachability analysis without outliers} on the running Example \ref{ex:foursquares}.
%
We take $M=10000$ i.i.d samples from the reachable set $\set{S}$ by sampling uniformly $M$ i.i.d samples in $\set{I}$, which we then split into a calibration set of size $N=2000$ and a training set of size $M-N$. 
\figureref{fig:Calibrated_sublevels_M=10000} shows the approximated reachable set produced by \algorithmref{alg:Reachability analysis without outliers} for various degrees $d$. \theoremref{thm:calibrated_sublevels} guarantees that with confidence $1-\delta = 99\%$, the coverage error  $\epsilon$ is lower than $ \epsilon \leq 0.002 $. Notably, in contrast to the algorithm presented in \cite{devonport}, this guarantee is independent of the dimension of the samples $n$ and the degree of the empirical Christoffel polynomial $d$. It only depends on the confidence parameter $\delta$ and the size $N$ of the calibration set.
\begin{figure}[tbp]
\floatconts
{fig:Calibrated_sublevels_M=10000}
  {\caption{Reach set approximations (outlined in purple) from Example \ref{ex:example2},
  obtained with \algorithmref{alg:Reachability analysis without outliers}, which uses the Christoffel polynomial as a nonconformity function, for $M=10000$ samples, of which $N=2000$ are the calibration set (red dots) and the remainder the training set (black dots). 
  Higher degrees $d$ lead to tighter approximation.
  }}
  {\centering%
    \subfigure[$d=6$, $\varepsilon=0.002$]{\label{fig:image-a-cs}   \includegraphics[width=0.32\linewidth]{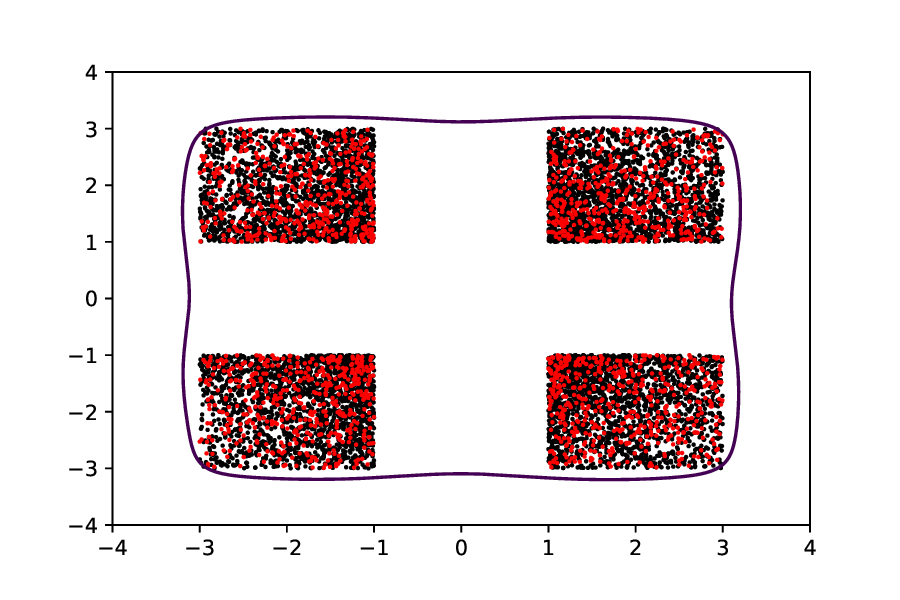}}%
    \subfigure[$d=10$, $\varepsilon=0.002$]{\label{fig:image-b-cs}%
\includegraphics[width=0.32\linewidth]{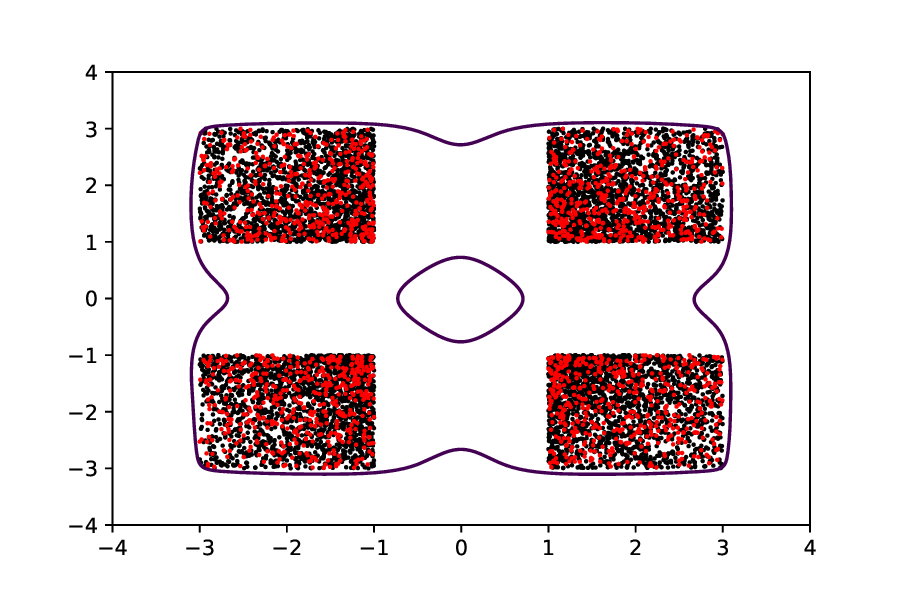}}  
    \subfigure[$d=15$, $\varepsilon=0.002$]{\label{fig:image-c-cs}%
\includegraphics[width=0.32\linewidth]{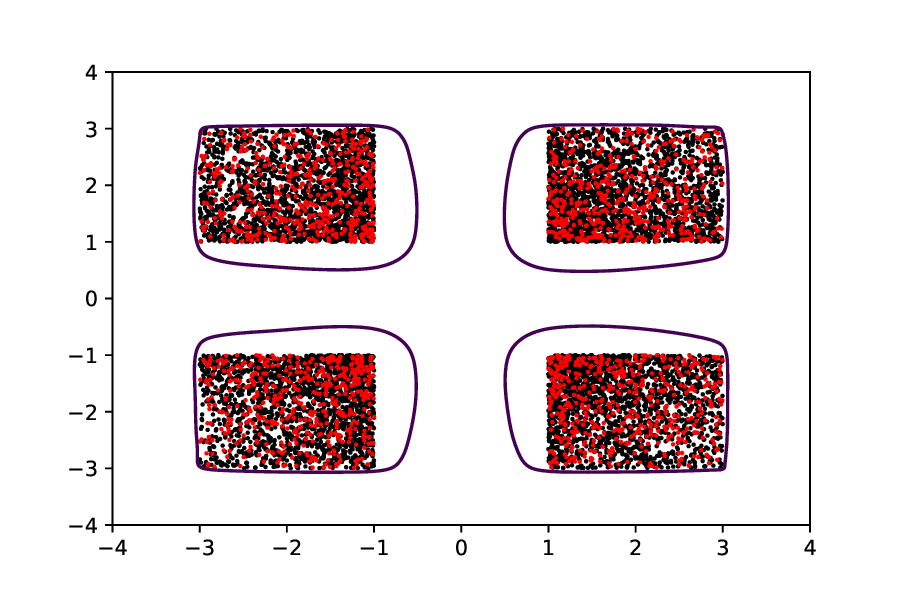}}   
  }
\end{figure}
\figureref{fig:Calibrated_sublevels_M=1000} shows the same result for  $M=1000$ samples, of which $N=200$ samples were utilized as a calibration set. 
Here, the coverage error $\epsilon$ will be lower than $ \epsilon \leq 0.02 $.
To empirically verify the theoretical guarantees obtained in \theoremref{thm:calibrated_sublevels}, we repeated this experiment 1000 times. The empirical error was computed by checking how many of these 10000 samples were not contained in the approximated reachable set. 
In only $6$ experiments, the coverage error exceeded $\epsilon=0.02$, confirming that the confidence $1-\delta$ is greater than $99\%$.
\begin{figure}[tbp]
\centering
\floatconts
  {fig:Calibrated_sublevels_M=1000}
  {\caption{Reach set approximations (outlined in purple) from Example \ref{ex:example2}, 
  with a reduced sample size of $M=1000$, of which $N=200$ are used as a calibration set.}}
  {%
    \subfigure[$d=6$, $\varepsilon=0.02$]{\label{fig:image-a-m1000}%
      \includegraphics[width=0.32\linewidth]{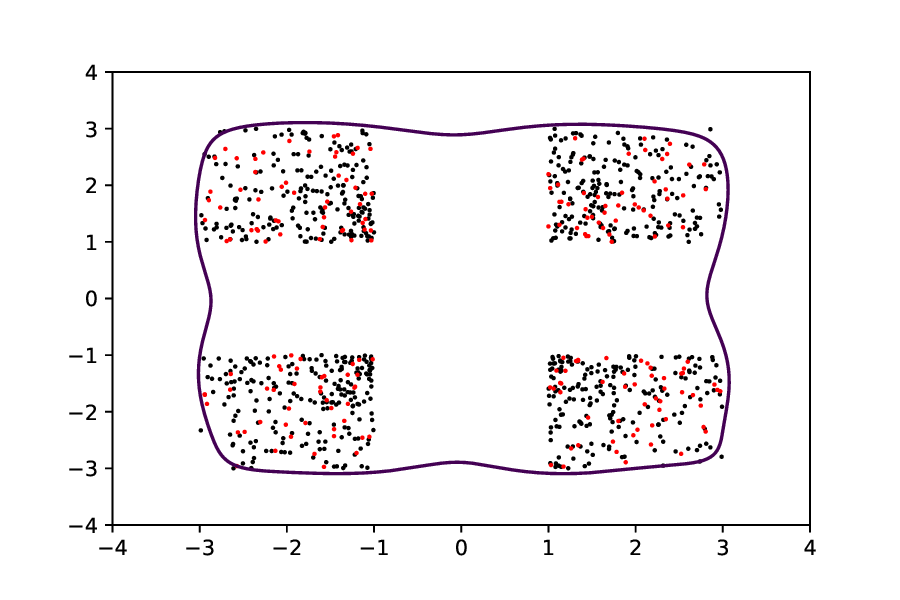}}%
    \subfigure[$d=10$, $\varepsilon=0.02$]{\label{fig:image-b-m1000}
\includegraphics[width=0.32\linewidth]{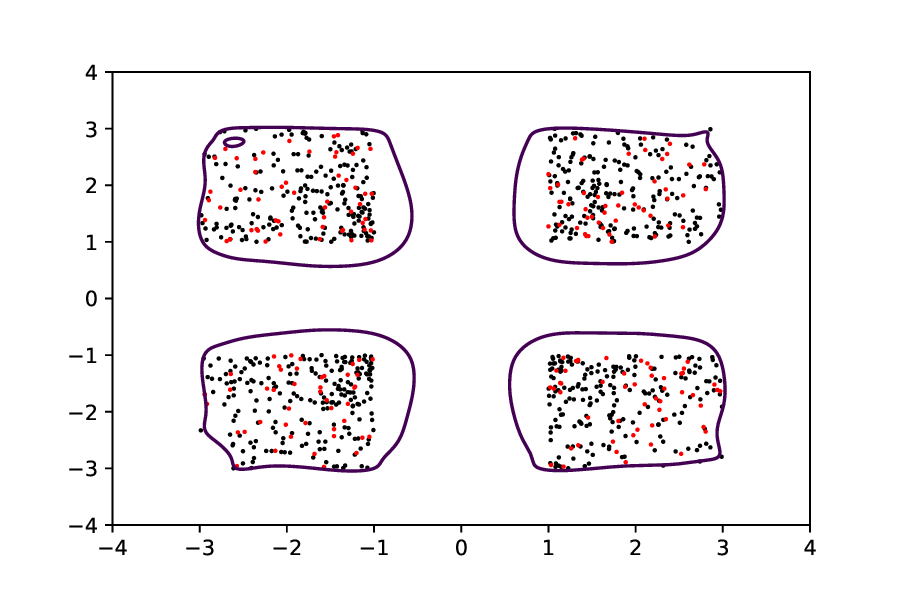}}  
    \subfigure[$d=15$, $\varepsilon=0.02$]{\label{fig:image-c-m1000}
\includegraphics[width=0.32\linewidth]{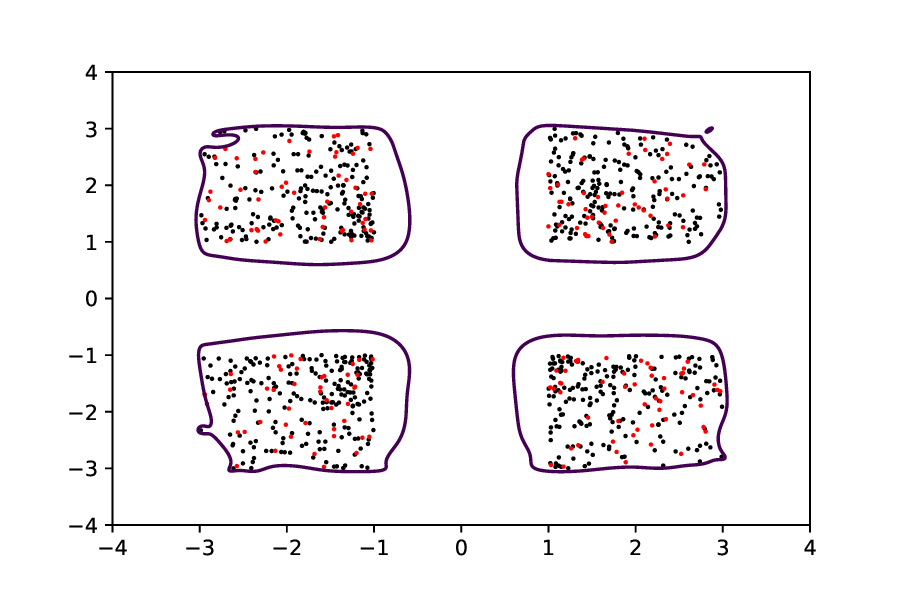}}   
  }
\end{figure}
\end{example}
\subsection{Avoiding the Calibration Set}
\label{ssec:Avoiding the Calibration Set}

In this section, we circumvent split between training and calibration sets by using \emph{trans\-duc\-tive conformal prediction} \cite{vovk2013transductive}. Transductive conformal prediction is a method used to construct prediction regions for a new data point without relying on a separate training set or calibration set. The calibration set is taken to be the entire training set plus the point at which the function is evaluated, in other words a new non conformity is modulated by the data point. 
The statistical guarantees of the previous section, and in particular of Theorem\;\ref{thm:calibrated_sublevels}, hold also for this choice of nonconformity function, with $\set{D}_\mathrm{cal} := \set{D}$.
This approach allows us to use all the available sample points from the measure $\mu$ to train the Christoffel function and compute the conformal region, but at the price of higher computational cost, as will be discussed below.

Let the training set be $\set{D} = \{ \Vec{x}^1 ,\Vec{x}^2,...,\Vec{x}^N  \}$ be $N$ i.i.d samples from the probability distribution $\mu$. To compute the p-value at any point $\Vec{x} \in  \mathbb{R}^n$, we add $\Vec{x}$ to the set $\set{D}$ before computing the empirical Christoffel polynomial. 
 Let $\set{D}_x = \set{D} \cup \{ \Vec{x} \}$, let the empirical measure for $\set{D}_x$ be $\hat{\mu}_x$, and let $\widehat{\mathbf{M}}_x$ be its moment matrix. 
Using $\set{D}_x$ in the empirical Christoffel polynomial, we get the nonconformity function
 $$
r(\Vec{x}) =  \Lambda^{-1}_{\hat{\mu}_x, d}(\Vec{x}) = \mathbf{v}_{d}(\Vec{x})^T \widehat{\mathbf{M}}_x^{-1}\mathbf{v}_{d}(\Vec{x}).
 $$
%
 We now have to evaluate a different empirical Christoffel polynomial each time we evaluate the p-value
 \[  p_{value}(\Vec{x})=\tfrac{1}{N}\left|\bigl\{i  \bigm| \Lambda^{-1}_{\hat{\mu}_x, d}(\Vec{x}^{i})\geq \Lambda^{-1}_{\hat{\mu}_x, d}(\Vec{x})\bigr\}\right| \]
In particular, we need to compute a new moment matrix and invert it for each evaluation. This is computationally expensive, on the order of $\mathcal{O}(s(d)^3)$. To avoid this, we compute the inverse moment matrix of the set $\set{D}_x$ incrementally using the Sherman-Morrison formula, as proposed by \cite{ducharlet2022leveraging}. 
This allows us to replace the evaluation of $\Lambda^{-1}_{\hat{\mu}_x, d}(\Vec{x})$, which depends on $\Vec{x}$, with evaluations of the original Christoffel polynomial $\Lambda^{-1}_{\hat{\mu}, d}(\Vec{x})$, plus one additional product:
\begin{equation}
\Lambda^{-1}_{\hat{\mu}_x, d}(\Vec{x}) 
= \frac{\Lambda^{-1}_{\hat{\mu}, d}(\Vec{x})}{1+\Lambda^{-1}_{\hat{\mu}, d}(\Vec{x})}, \quad
\Lambda^{-1}_{\hat{\mu}_x, d}(\Vec{x}^{i}) = \Lambda^{-1}_{\hat{\mu}, d}(\Vec{x}^{i})
  - \frac{\bigl( \mathbf{v}_{d}(\Vec{x})^\intercal \Vec{y}^{i} \bigr)^2 }{1+\Lambda^{-1}_{\hat{\mu}, d}(\Vec{x})},
\end{equation}
where $\Vec{y}^{i} = \widehat{\mathbf{M}}_d^{-1}\mathbf{v}_{d}(\Vec{x}^{i})$ are vectors that can be precomputed.
The cost of precomputing $\Lambda^{-1}_{\hat{\mu}, d}(\Vec{x}^{i})$ and the vectors $\Vec{y}^{i}$ is $\mathcal{O}(Ns(d)^2)$, with storage requirements $\mathcal{O}(Ns(d))$.
The reduces the cost of evaluating $\Lambda^{-1}_{\hat{\mu}_x, d}(\Vec{x}^{i})$ for a given $\Vec{x}$ to $\mathcal{O}(s(d))$.
The resulting cost of evaluating
$p_{value}(\Vec{x})$ is $\mathcal{O}(Ns(d)+s(d)^2)$.

\begin{figure}[tbp]
\floatconts{fig:transductive}
  {\caption{Reach set approximation of example \ref{ex:foursquares} using the transductive conformal prediction 
  and a Christoffel polynomial of degree $d=15$, which avoids the split into training and calibration sets. 
  }}
  {\includegraphics[width=0.4\linewidth]{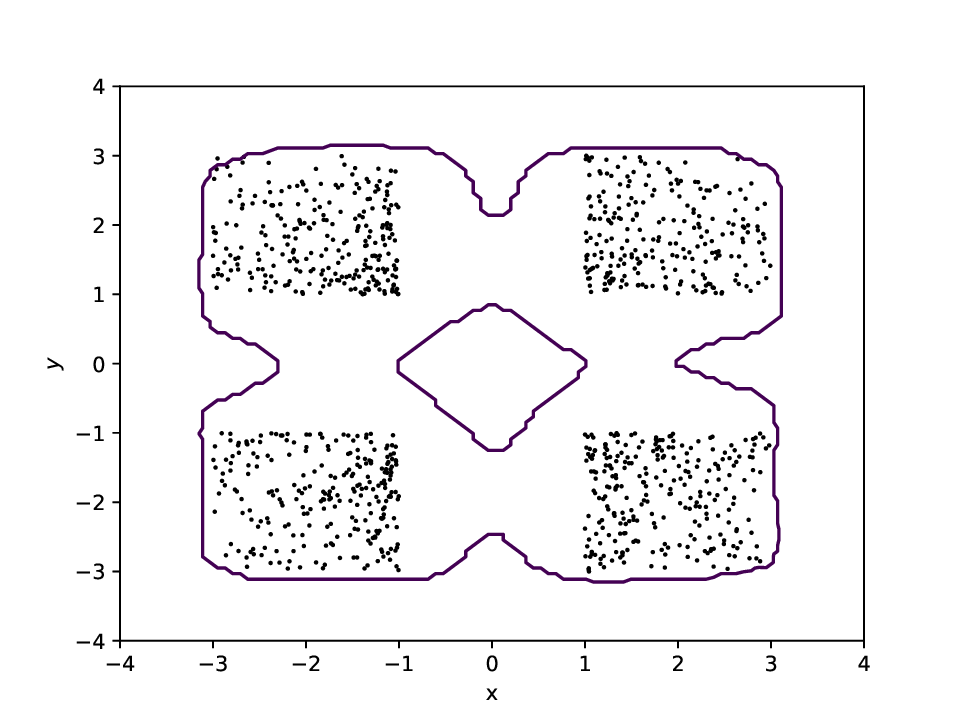}}
\end{figure}
\begin{example}
Building on example \ref{ex:foursquares}, \figureref{fig:transductive} shows the reachable set approximation obtained using transductive conformal prediction with a Christoffel function of degree 15. In this case, we use the same $M=N=1000$ sample points to train the Christoffel function and compute the set approximation. The guarantees provided by \theoremref{thm:calibrated_sublevels} assert that, using a training set of 1000 samples, the coverage error is below $0.45\%$ with confidence $1-\delta = 0.99$.
\end{example}

\section{Robustness to Outliers}
\label{sec:Robustness to Outliers}
In this section, we address the presence of outliers in the data set. As data may not be very abundant in real-life applications, one may have to work with a calibration set containing outliers without knowing which data point is an outlier and which one isn't. The presence of outliers in the training set does not affect the theoretical guarantees obtained using conformal prediction theory, though it will affect the tightness of the approximated reachable set. On the other hand, the presence of outliers in the calibration set will impact those guarantees. 

The following theorem provides PAC guarantees on the reach set approximation even with outliers in the calibration set.
Under the assumption that no more than $p$ outliers are in the calibration set $\set{D}$, the confidence in the result depends on $\epsilon$, $p$, and the size of the calibration set $N$. 
%
%
\begin{theorem}\label{thm:conformal outliers}
Consider a set of points $\set{D} = \{ \Vec{x}^1 ,\Vec{x}^2,...,\Vec{x}^N  \}$ containing no more than $p$ outliers, with $2p +1 < N$, and where the rest of samples are i.i.d from a probability measure $\mu$. Then for any i.i.d vector $\Vec{x}$ sampled from $\mu$ and $\epsilon \in (0,1)$,
\begin{equation}\label{eq:outliers}
    \mathbb{P}\biggl( \mu\Bigl(C_{\set{D}}^{\frac{p+1}{N}}\Bigr)  \geq 1-\epsilon \biggr) \geq \sum_{i = p+1}^{N-p} \tbinom{N-p}{i} \epsilon^i (1-\epsilon)^{N-p-i}
\end{equation}
\end{theorem}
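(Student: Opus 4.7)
My plan is to isolate the (at least) $N-p$ i.i.d.\ inliers in $\set{D}$, argue that $C_{\set{D}}^{(p+1)/N}$ always contains a sublevel set built only from the inlier scores, and then reduce the probability of interest to a classical uniform-order-statistic calculation. I would start by fixing any subset of $N-p$ inliers in $\set{D}$, sorting their nonconformity scores as $r_{(1)}\leq\cdots\leq r_{(N-p)}$, and noting that the hypothesis $2p+1<N$ gives $N-2p\geq 2$, so the index $N-2p$ is well-defined.

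The key step is to establish the deterministic inclusion
\[
\{\Vec{x}\in\mathbb{R}^n : r(\Vec{x})\leq r_{(N-2p)}\}\ \subseteq\ C_{\set{D}}^{\frac{p+1}{N}}.
\]
The reason is that for any $\Vec{x}$ with $r(\Vec{x})\leq r_{(N-2p)}$, the $p+1$ largest inlier scores $r_{(N-2p)}, r_{(N-2p+1)},\ldots, r_{(N-p)}$ are all $\geq r(\Vec{x})$, so at least $p+1$ of the $N$ indices $i$ satisfy $r(\Vec{x}^i)\geq r(\Vec{x})$, \emph{regardless of how the $p$ outliers are placed}; this yields $p_{value}(\Vec{x})\geq (p+1)/N$. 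Next, assuming (as in Theorem~\ref{thm:Tebjou}) that $r$ and $\mu$ are continuous, I would denote by $F_\mu$ the CDF of $r(\Vec{x})$ when $\Vec{x}\sim\mu$, set $U_i := F_\mu(r(\Vec{x}^i))$ for each inlier index $i$, and take $\mu$-measure in the inclusion above to obtain
\[
\mu\Bigl(C_{\set{D}}^{\frac{p+1}{N}}\Bigr)\ \geq\ F_\mu\bigl(r_{(N-2p)}\bigr)\ =\ U_{(N-2p)},
\]
the $(N-2p)$-th order statistic of $N-p$ i.i.d.\ Uniform$[0,1]$ variables (monotonicity of $F_\mu$ turns the $k$-th smallest score into the $k$-th smallest uniform).

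To finish, I would rewrite $\{U_{(N-2p)}\geq 1-\epsilon\}$ as the event that at least $(N-p)-(N-2p)+1=p+1$ of the $N-p$ uniforms land in $[1-\epsilon,1]$; each does so independently with probability $\epsilon$, and the resulting binomial tail is exactly $\sum_{i=p+1}^{N-p}\binom{N-p}{i}\epsilon^i(1-\epsilon)^{N-p-i}$, matching the right-hand side of \eqref{eq:outliers}. The main obstacle is handling the adversarial freedom of the $p$ outliers, which in principle could sabotage the $p$-value at many points. The containment step above circumvents this by certifying the $p$-value bound using only the top $p+1$ inlier scores, after which the argument becomes a purely classical calculation on order statistics of i.i.d.\ uniforms.
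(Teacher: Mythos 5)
Your proof is correct and follows essentially the same route as the paper: isolate the (at least) $N-p$ i.i.d.\ inliers, show the conformal region over the full contaminated set contains a region certified by the top $p+1$ inlier scores, and reduce the coverage probability to the binomial tail $\sum_{i=p+1}^{N-p}\binom{N-p}{i}\epsilon^i(1-\epsilon)^{N-p-i}$ via uniform order statistics. The only difference is cosmetic: the paper invokes Theorem~\ref{thm:calibration} on the inlier set (with $b_1=\cdots=b_{p+1}=\epsilon$ and the rest equal to $1$) and then uses the containment $C_{\set{D}_{inliers}}\subseteq C_{\set{D}}$, whereas you unpack that order-statistic argument directly and work with exactly $N-p$ inliers from the start.
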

%
This bound is tight in the sense that for $p=0$, \eqref{eq:outliers} is identical to the case without outliers, i.e., we obtain \eqref{eq:final1}.
\begin{proof}
     Let $\set{D} = \set{D}_{inlier} \cup \set{D}_{oulier}$, with $m \leq p$ being the unknown real size of $\set{D}_{outlier}$.
     Let $U_1, \ldots, U_{N-m} \stackrel{\text { i.i.d. }}{\sim} \operatorname{Unif}([0,1])$, with order statistics $U_{(1)} \leq U_{(2)} \leq \ldots \leq U_{(N-m)}$.
     \\
     For $\epsilon \in (0,1)$ let $b_1 = ...= b_{p+1} = \epsilon $ and $ b_{p+2} = ...=b_{N-p}=... = b_{N-m} = 1 $. Then 
     $\forall m \leq p$:
\begin{align*}
\mathbb{P}\left[U_{(1)} \leq b_1, \ldots, U_{(N-m)} \leq b_{N-m}\right] 
& \geq \mathbb{P}\left[U_{(1)} \leq b_1, \ldots, U_{(N-p)} \leq b_{N-p}\right] \\
& \geq \sum_{i = p+1}^{N-p} \tbinom{N-p}{i} \epsilon^i (1-\epsilon)^{N-p-i}.
\end{align*}
The above result is obtained by the following reasoning: let $ 0 < i \leq N-p $, if we have $N-p$ random variable $V_i, \ldots, V_{N-p} \stackrel{\text { i.i.d. }}{\sim} \operatorname{Unif}([0,1])$ the probability to have exactly $i$ of them below $\epsilon$ is equal to $\binom{N-p}{i}\epsilon^i (1-\epsilon)^{N-p-i}$, therefore, the probability of having at least $p+1$ of them below $\epsilon$ is equal to $\sum_{i = p+1}^{N-p} \binom{N-p}{i} \epsilon^i (1-\epsilon)^{N-p-i}.$
\\Let $\Vec{x}$ be an i.i.d vector sampled from $\mu$. By definition, $$ \mu \Bigl(   C_{\set{D}_{inliers}}^{\frac{p+1}{N-m}}   \Bigr) = \mathbb{P}\Bigl( \Vec{x} \in C_{\set{D}_{inliers}}^{\frac{p+1}{N-m}} \Bigm| \set{D}_{inliers}\Bigr).$$ 
Using \theoremref{thm:calibration},  we get : 
\[ \mathbb{P}\biggl( \mu\Bigl(C_{\set{D}_{inliers}}^{\frac{p+1}{N-m}}\Bigr)  \geq 1-\epsilon \biggr) \geq \sum_{i = p+1}^{N-p} \tbinom{N-p}{i} \epsilon^i (1-\epsilon)^{N-p-i} \]
Since $ C_{\set{D}_{inliers}}^{\frac{p+1}{N-m}} \subseteq C_{\set{D}}^{\frac{p+1}{N}} $, we have $
     \mu\Bigl( C_{\set{D}}^{\frac{p+1}{N}}\Bigr) \geq \mu\Bigl( C_{\set{D}_{inliers}}^{\frac{p+1}{N-m}}\Bigr),
     $
which leads us 
to \eqref{eq:outliers}. 
%
\end{proof}
Note that the bound in \theoremref{thm:conformal outliers} \eqref{eq:outliers} is tight in the sense that for $p=0$ we obtain the same lower bound as in \theoremref{thm:calibrated_sublevels} \eqref{eq:final1}.
\begin{table}[tbp]\label{tab:table1}
\floatconts
  {tab:example-hline1}
  {\caption{The confidence bound of \eqref{eq:outliers} for different sizes $N$ of the calibration set and the desired coverage error $\epsilon$ for a calibration set with $5 \%$ outliers or less}}%
  {%
\begin{tabular}{@{}rSSSS@{}}
\toprule
& \multicolumn{4}{c}{confidence in $\%$} \\
\cmidrule(l){2-5}
size $N$ & \mc{$\epsilon = 4\%$} & \mc{$\epsilon = 5\%$} & \mc{$\epsilon = 6\%$} & \mc{$\epsilon = 10\%$}\\
\midrule
100 & 33  &  51&  68&  96\\
500 &  10 &  42& 77 & 99.99\\
1000 &  3&  37 &  84& 99.99\\
2000 &  0.4 &  31 &  92 & 99.99 \\
\bottomrule
\end{tabular}
  }
\end{table}
\tableref{tab:table1} shows the confidence bound of \eqref{eq:outliers} for different values of the calibration set size and the approximation uncertainty $\epsilon$ under the assumption that no more than $5 \%$ of the calibration set are outliers. 
We observe that the confidence rapidly approaches $100\%$ when the admissible coverage error is above the ratio of outliers; it rapidly drops to $0\%$ when it is below.

\begin{example}
To evaluate the performance of \algorithmref{alg:Reachability analysis with outliers} on example \ref{ex:foursquares}, we construct a data set from $M=1500$ samples of the reach set and substitute $10\%$ with outliers, i.e., i.i.d. samples outside the reachable set. We use a calibration set of size $N=500$,  and the rest of the samples are used as a training set to compute the empirical Christoffel polynomial. \figureref{fig:algo3} shows the resulting approximation. With \theoremref{thm:conformal outliers}, the coverage error $\epsilon = 0.15$ with a confidence = $98.9\%$. 
\begin{figure}[tbp]
\floatconts{fig:algo3}
  {\caption{An approximation of the reach set of example \ref{ex:foursquares} (purple outline) obtained with \algorithmref{alg:Reachability analysis with outliers} using a Christoffel polynomial of degree 15, on a data set with $10\%$ outliers. The training set is shown in black, the calibration set in red. }}  {\includegraphics[width=0.4\linewidth]{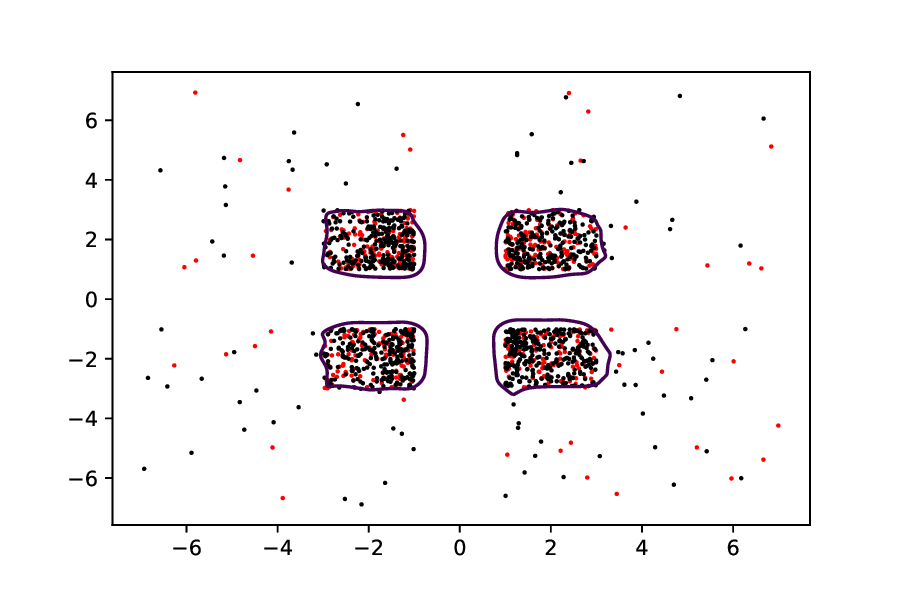}}
\end{figure}
To empirically confirm these bounds, as in Example\;\ref{ex:example2}, we repeat the experiment 1000 times with different samples. For each experiment, we take 10000 samples of the reach set in order to compute the empirical coverage error. None of the experiments resulted in an empirical coverage error above $15\%$, which is consistent with the theoretical guarantee of $98.9\%$ confidence.
\end{example}

\begin{algorithm2e}
\caption{Reachability analysis with outliers}
\label{alg:Reachability analysis with outliers}

\KwIn{Transition function $f$; 
initial set $\set{I} \subset \mathbb{R}^n$; Christoffel function order $d$, $N$ the size of the calibration set and $p$ the upper bound number of outliers in the calibration; $M$ the total number of simulations with $M > N$ and an i.i.d data sample 
$\set{D} = \{\Vec{x}^{(i)}$ \} for $i \in \{1,...,M\}$. The sample $\Vec{x^{(i)}}$ is an \emph{inlier} if $\Vec{x^{(i)}} \in f(\set{I})$ and an \emph{outlier} otherwise.  . }
\KwOut{Set $\hat{\set{S}}$ representing an $\epsilon$-accurate approximation of the true reachable set $\set{S}$ with confidence $\sum_{i = p+1}^{N-p} \binom{N-p}{i} \epsilon^i (1-\epsilon)^{N-p-i}$}

\begin{enumerate}
  \item Compute the empirical moment matrix with the associated Christoffel degree $d$ using $M-N$ samples:
   $ \widehat{\mathbf{M}}_d =\frac{1}{M-N} \sum_{i=N+1}^{M} \mathbf{v}_{d}\left(\Vec{x^{(i)}}\right) \mathbf{v}_{d}\left(\Vec{x^{(i)}}\right)^{\top} $
     \item Use a calibration set of $N$ samples: $\set{D}_\mathrm{cal} = \{ \Vec{x^{(i)}} \mid i \in \{1,..,N\}\}$ and:
      \begin{enumerate}
          \item Compute the  scores: 
          $\text{score}_i = \mathbf{v}_{d}(\Vec{x^{(i)}})^{\top} {\widehat{\mathbf{M}}}^{-1}_d  \mathbf{v}_{d}(\Vec{x^{(i)}})$ for $i=1,\ldots,N$
          \item Sort the scores in descending order such that: $\text{score}_1 \geq \text{score}_2 \geq ... \geq \text{score}_N $
          \item  Set the conformal region $C_{\set{D}}^{\frac{p+1}{N}}$ as $ \hat{\set{S}}  = \left\{\Vec{x} \in \mathbb{R}^{n}: \mathbf{v}_{d}(\Vec{x})^{\top} {\widehat{\mathbf{M}}}^{-1}_d  \mathbf{v}_{d}(\Vec{x}) \leq \text{score}_{p+1}  \right\}   $
      \end{enumerate}

\end{enumerate}

\end{algorithm2e}

\section{Experiments}
\label{sec:Experiments}
We now turn our focus to the suitability of the empirical Christoffel polynomial as a non-conformity function.
\subsection{Empirical False Positive Rate}
We start by examining the tightness of the reachable set approximation in example \ref{ex:example2} through the empirical measurement of false positives. 
%
%
\begin{table}[tbp]
\floatconts
  {tab:example-hline3}
  {\caption{\label{tab:table3}Experimentally estimated false-positive rates for different algorithms applied to the reach set approximation of Example\;\ref{ex:foursquares}, with confidence $1-\delta=99\%$}}%
  {\small
\begin{tabular}{@{}lrrrSS@{}}
\toprule
Nonconformity function  & $|\set{D}|$ & $|\set{D}_\mathrm{train}|$ & $|\set{D}_\mathrm{cal}|$ & \mc{$\epsilon$ in \%} & \mc{FP$\%$} \\
\midrule
Christoffel with $d=6$& 10000 & 8000 & 2000 & 0.2 & 49.5\\
Christoffel with $d=10$&  &  &  & 0.2 & 39.5\\
Christoffel with $d=15$&  \multirow{3}{*}{\vdots} & \multirow{3}{*}{\vdots} & \multirow{3}{*}{\vdots} & 0.2 & 11.7\\
Christoffel with $d=18$&  \multirow{3}{*}{\vdots} & \multirow{3}{*}{\vdots} & \multirow{3}{*}{\vdots} & 0.2 & 7.2\\
LOF score &  &  &  & 0.2 & 3.4 \\
IsolationForest score &  &  &  & 0.2 & 92.9 \\
Oneclass SVM score &  &  &  & 0.2 & 65.7 \\
Christoffel with $d=6$& 1000 & 800 & 200 & 2.2 & 44.6\\
Christoffel with $d=10$&  &  &  & 2.2 & 20\\
Christoffel with $d=15$& \multirow{3}{*}{\vdots} & \multirow{3}{*}{\vdots} & \multirow{3}{*}{\vdots} & 2.2 & 12.7\\
Christoffel with $d=18$& \multirow{3}{*}{\vdots} & \multirow{3}{*}{\vdots} & \multirow{3}{*}{\vdots} & 2.2 & 12.4\\
LOF score &  &  &  & 2.2 & 10.6 \\
IsolationForest score &  &  &  & 2.2 & 86.8 \\
Oneclass SVM score &  &  &  & 2.2 & 60.7 \\
Transduct. Christ. with $d=15$ & 1000 & 1000 & 1000 & 0.5 & 46.6\\
\bottomrule
\end{tabular}
\vspace{1ex}

{ \footnotesize{\raggedright 
$\epsilon$ = Coverage error, at least $1-\epsilon$ of the measure is covered; 
FP$\%$ = False positives in \%, measured by uniform sampling of a sufficiently large bounding box and counting samples in $\hat S \setminus S$}}
  }
\end{table}

We compare the empirical Christoffel polynomial with other prevalent nonconformity functions: one-class SVM, Isolation Forest \citep{isolation}, and Local Outlier Factor (LOF), as shown in \figureref{fig:svm-isolation-lof}. 
Only the approximation using LOF seems comparable to that of the Christoffel polynomial, while Isolation Forest exhibits significant variability depending on the random seed.

To gauge the number of false positives and assess the accuracy of the reachable set approximation, we generated 10,000 uniformly distributed samples within the domain $[-4, 4]^2$. The false-positive rate was empirically determined for various degrees $d$, as shown in \tableref{tab:table3}. As observed in earlier plots, a higher degree results in a more accurate fit of the reachable set. The false-positive rates for the other algorithms can also be observed in \tableref{tab:table3} for varying sizes of the training and calibration sets. Consistent with the findings from the \figureref{fig:svm-isolation-lof}, only the LOF provides results that are comparable in quality to those obtained using the Christoffel polynomial.

\begin{figure}[!t]
\floatconts
  {fig:svm-isolation-lof}
  {\caption{Reach set approximations (purple outline) of Example\;\ref{ex:foursquares} using one-class SVM, Isolation Forest, and Local Outlier Factor (LOF) as nonconformity functions, for a common training set of size 800 (black dots) and calibration set of size 200 (red dots). 
  }}
  {%
    \subfigure[One-class SVM]{\label{fig:image-a-svm}%
      \includegraphics[width=0.32\linewidth]{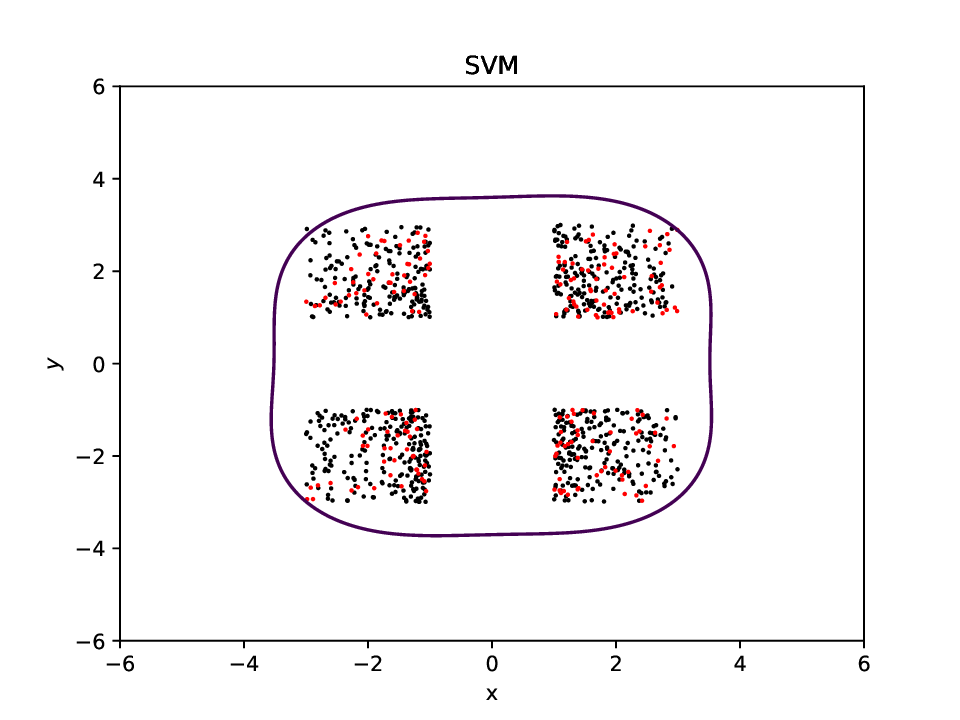}}%
    \subfigure[Isolation Forest]{\label{fig:image-b-isof}%
      \includegraphics[width=0.32\linewidth]{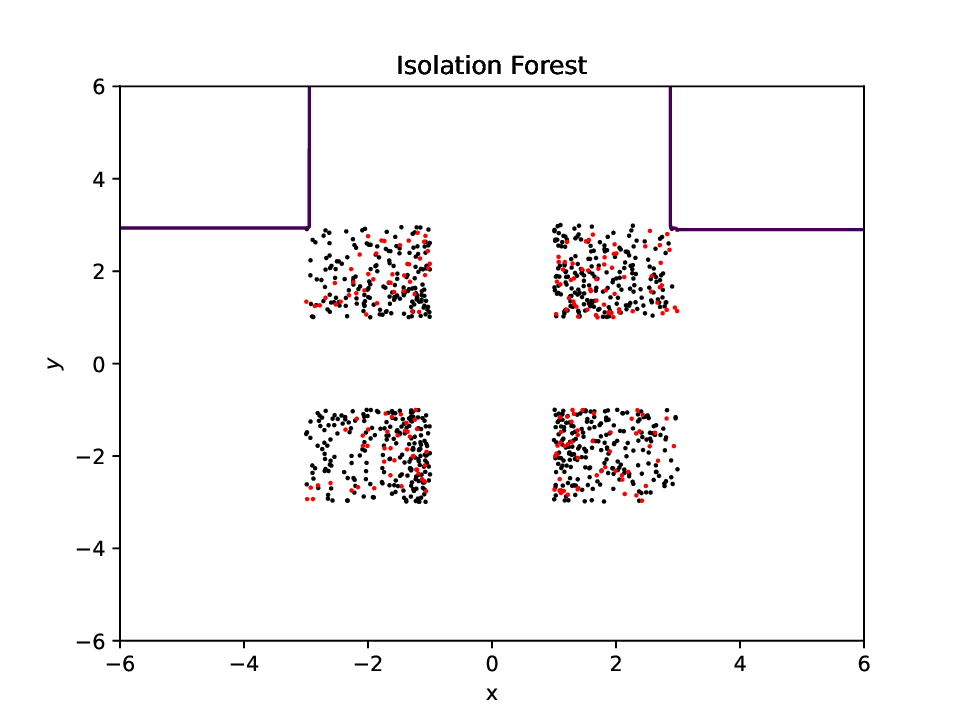}}   
    \subfigure[LOF]{\label{fig:image-c-lof}%
      \includegraphics[width=0.32\linewidth]{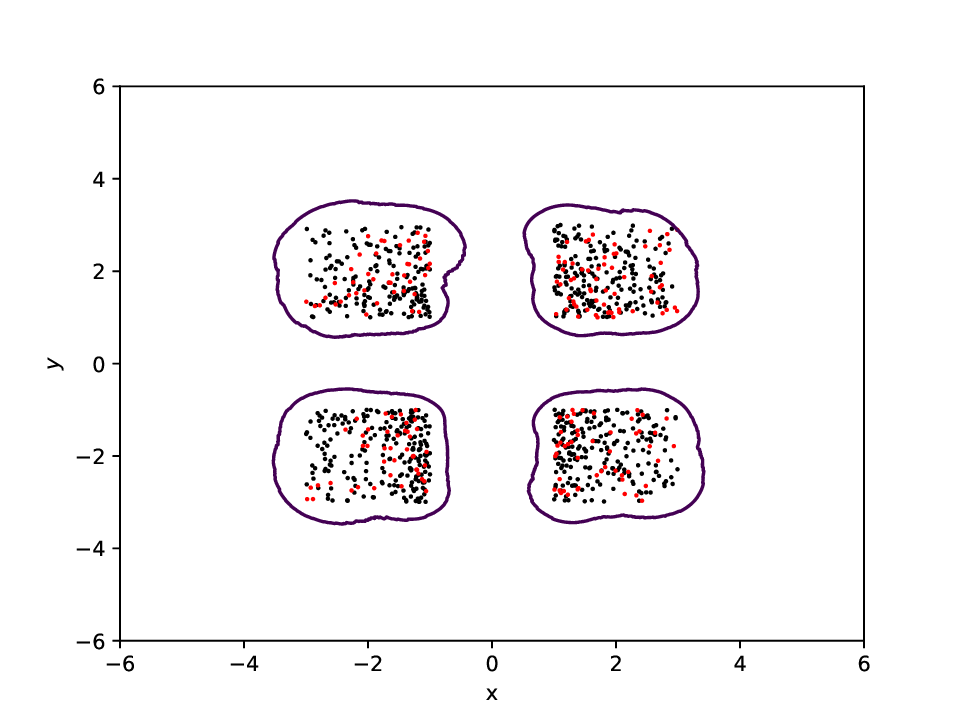}}   
  }
\end{figure}
To further demonstrate the effectiveness of the empirical Christoffel polynomial as a non-conformity function, we examine its robustness in the presence of outliers within the training set. Although the theoretical guarantees discussed in this article and in general conformal prediction hold for any choice of non-conformity function, even with outliers in the training set, the presence of these outliers can impact the accuracy of the model. To compare the empirical Christoffel polynomial with LOF, we conducted two experiments.
In the first experiment, we considered the region $[-1,1]^2$ as the reachable set to approximate. We focused on comparing the performance of the algorithms under the presence of outliers in the training set. We generated a training set of size 1,200 containing 200 outliers and a calibration set of size 200, all belonging to the reachable set.
The second experiment was similar to the first one, with a star-shaped region as the reachable set. We generated a training set of size 900 containing 100 outliers and a calibration set of size 200. Figure 9 illustrates how the empirical Christoffel polynomial and LOF approximate the true reachable set in the presence of outliers.

\begin{figure}[!t]
\floatconts
  {fig:small square}
  {\caption{Comparison of reachable set approximations for the empirical Christoffel polynomial (degree 10) and LOF in the first experiment, with the region $[-1,1]^2$ as the target. The training set, containing outliers, is represented by black dots, while the calibration set is shown in red. The plot highlights the performance differences and robustness of both methods in the presence of outliers, demonstrating how the empirical Christoffel polynomial is far more robust.}}
  {%
    \subfigure[Christoffel polynomial]{\label{fig:image-a-svm}%
      \includegraphics[width=0.4\linewidth]{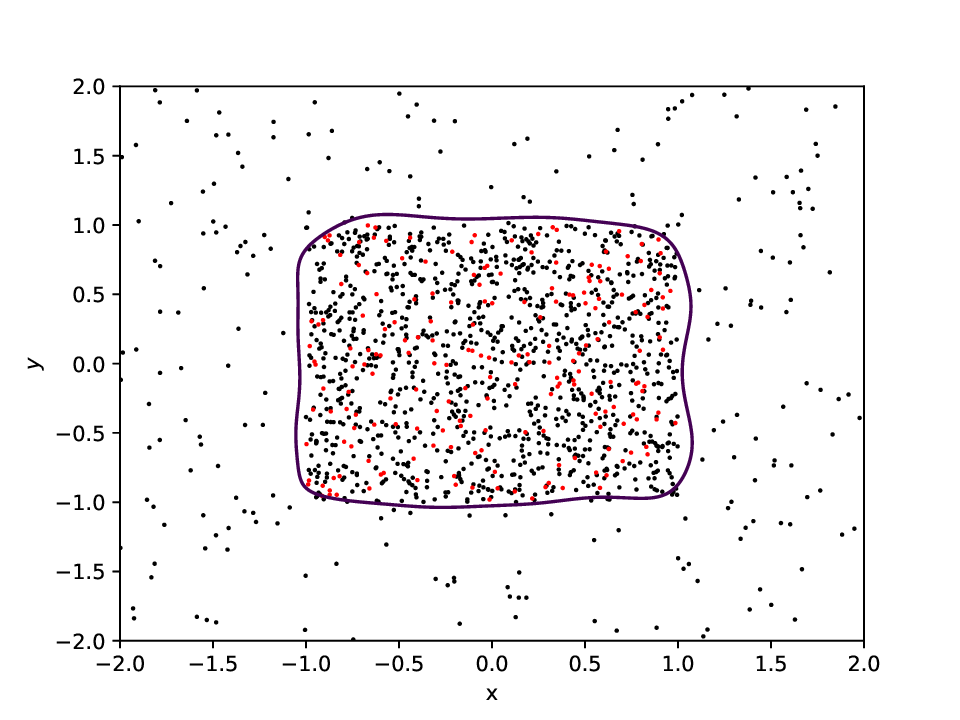}}%
    \subfigure[LOF]{\label{fig:image-b-isof}%
      \includegraphics[width=0.4\linewidth]{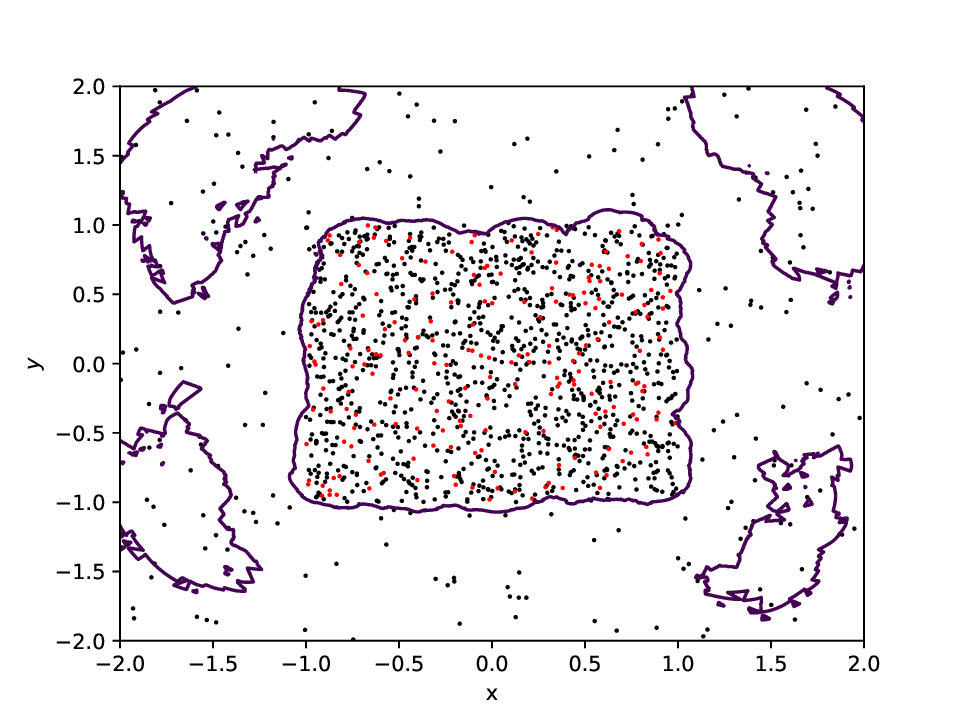}}\\   
   
  }
\end{figure}\begin{figure}[!t]
\floatconts
  {fig:star shape}
  {\caption{A comparison of reach set approximation (purple outline) using the Christoffel polynomial with degree 15 and LOF for the second experiment, which targets a star-shaped region. Training set samples are in black and calibration set in red. This plot highlights the performance and robustness of both methods when encountering outliers in a complex geometric scenario, illustrating the effectiveness of the empirical Christoffel polynomial under the presence of outliers.}}
  {%
    \subfigure[Christoffel polynomial]{\label{fig:image-a-svm}%
      \includegraphics[width=0.4\linewidth]{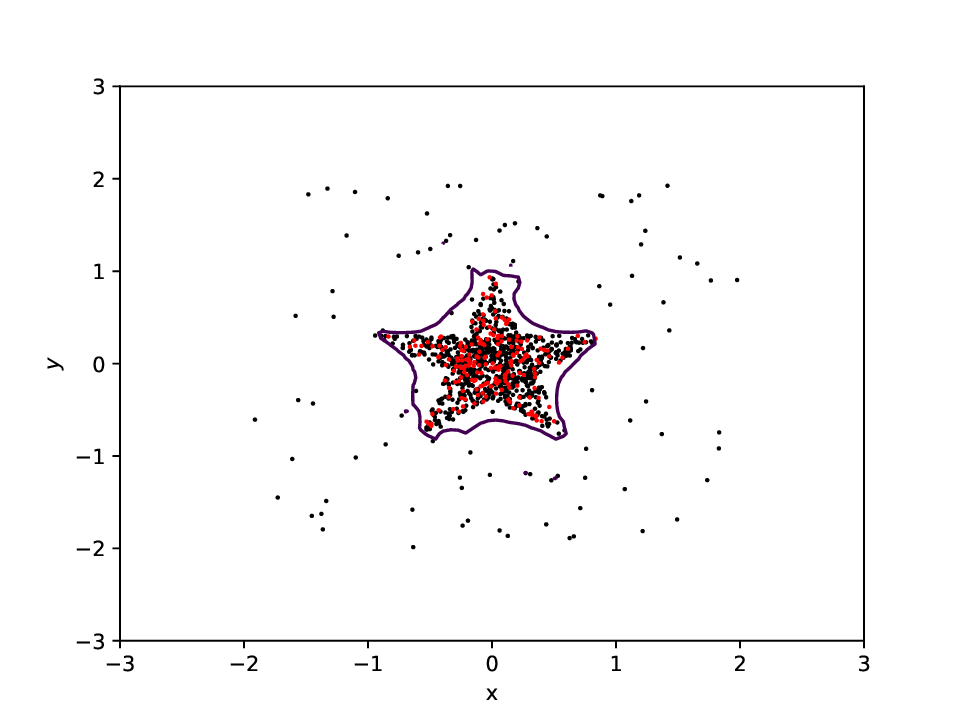}}%
    \subfigure[LOF]{\label{fig:image-b-isof}%
      \includegraphics[width=0.4\linewidth]{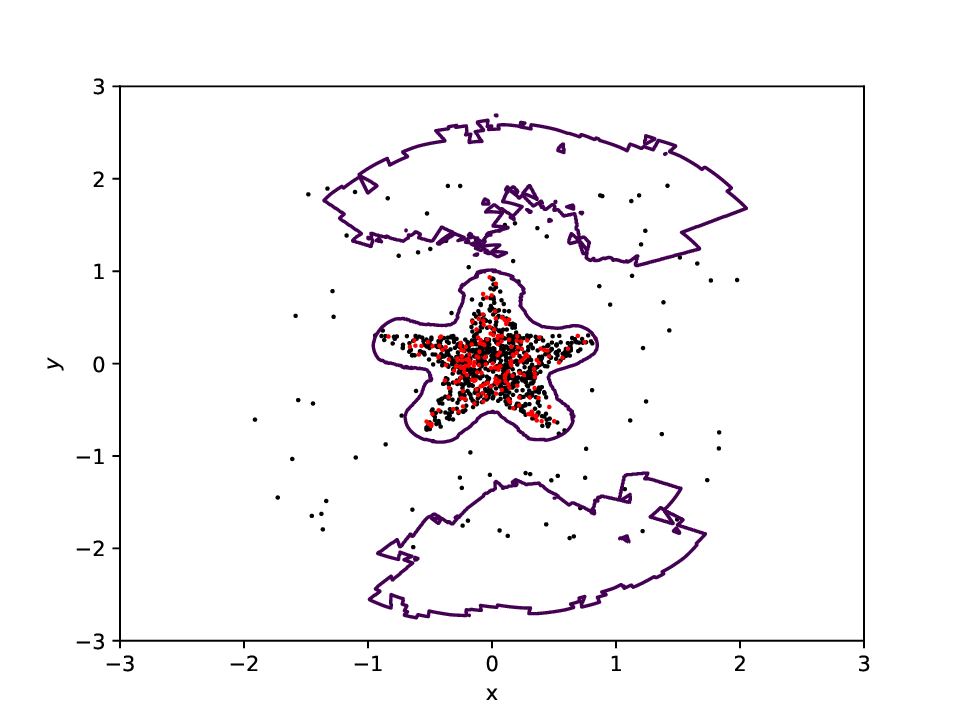}}\\   
   
  }
\end{figure}
\figureref{fig:small square,fig:star shape} display the performance of both the empirical Christoffel polynomial and LOF in handling outliers within the training set across distinct and complex geometric situations. When employed as a non-conformity function, the empirical Christoffel polynomial demonstrated greater robustness in the presence of outliers across both experiments.

%
\subsection{Duffing oscillator}
The Duffing oscillator is a nonlinear mathematical model that captures the behavior of a system that oscillates when subject to an external force. It has been used in a variety of physical systems, from mechanical vibrations to biological dynamics. The Duffing oscillator is described by the following nonlinear second-order differential equation: 
\[ {\ddot{x}}= -\delta\dot{x} +\alpha x -\beta x^3 +\gamma cos(\omega t)\]
Similar to \cite{devonport}, we take $\alpha=1$, $\beta= 1,\delta = 0.05, \gamma= 0.4$ and $\omega=1.3$.
We choose the initial set to be $ \set{I}=[-0.95,1.05]\times[-0.05,0.05]$.
\figureref{fig:subfigex} shows an approximation of the reach set, computed with the Christoffel function as nonconformity function for different degrees. 
We observe that for increasing degrees, the approximation is more precise and is able to recover holes. The results are comparable to those reported by \cite{devonport}, where no split into training and calibration sets was carried out.

\begin{figure}[!t]
\floatconts
  {fig:subfigex}
  {\caption{Reach set approximation (purple outline) of the duffing oscillator using the Christoffel polynomial with the data set split into training (black) and calibration set (red), for different degrees $d$ of the Christoffel function, with corresponding coverage error $\varepsilon$ for confidence $1-\delta = 0.99$.}}
  {%
    \subfigure[$d=6$, $\varepsilon=0.002$]{\label{fig:image-a-duff}%
      \includegraphics[width=0.31\linewidth]{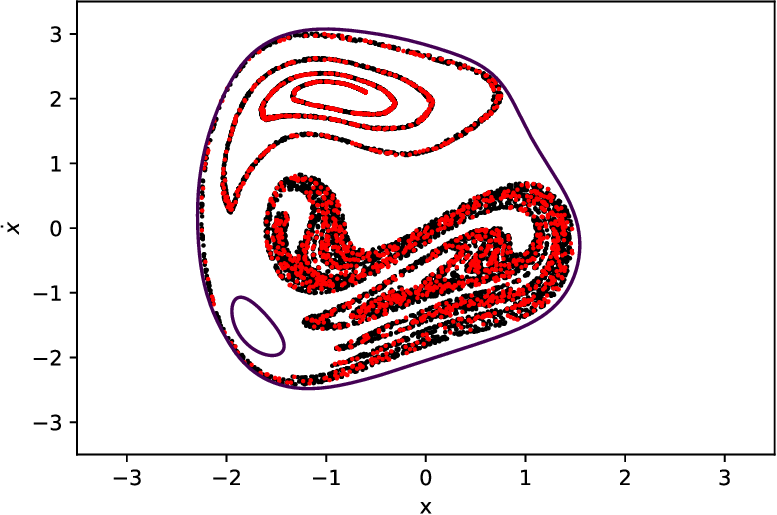}}%
    \subfigure[$d=10$, $\varepsilon=0.002$]{\label{fig:image-b-duff}%
      \includegraphics[width=0.31\linewidth]{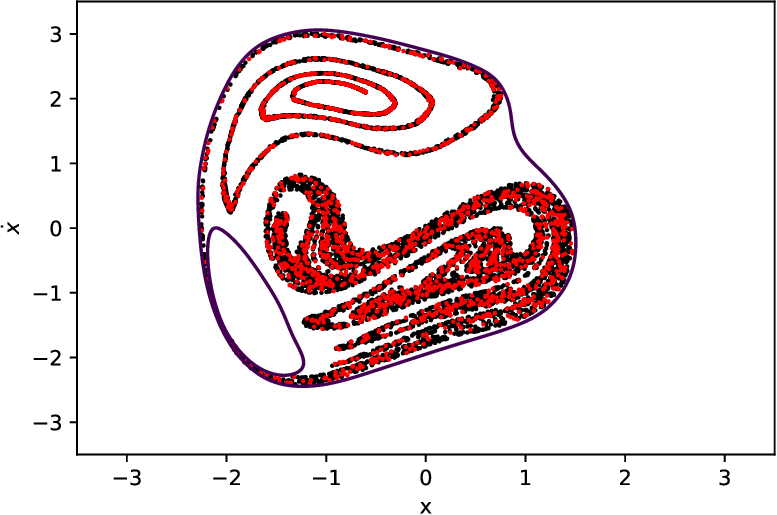}}   
    \subfigure[$d=15$, $\varepsilon=0.002$]{\label{fig:image-x-duff}%
      \includegraphics[width=0.31\linewidth]{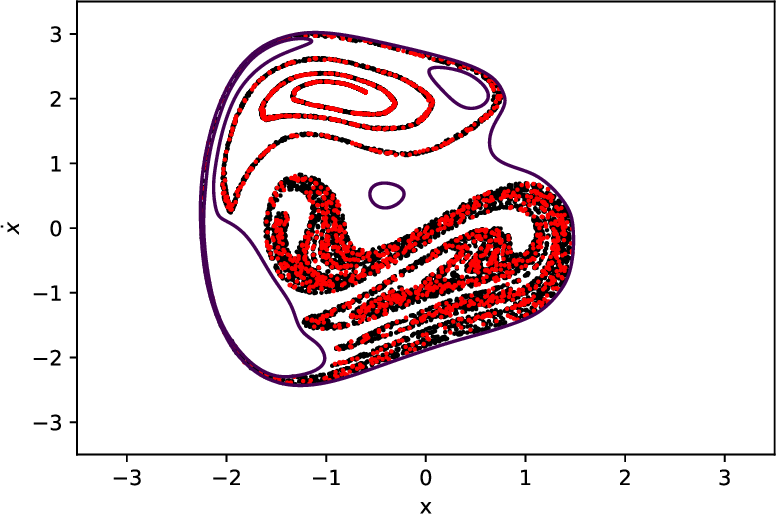}}   
  }
\end{figure}


\section{Conclusion}
%
In this paper, we studied the mathematical reach set approximation in the analysis of dynamical systems based on conformal prediction. We consider for the first time the use of the Christoffel function as a nonconformity function, thanks to its attractive properties in set and density approximation. Our conformal prediction approach provides stronger and more sample-efficient guarantees on reach set approximation and proposed a version of reach set approximation that is robust to outliers,  compared that the most relevant approaches in the literature. We exploited an incremental form of the Christoffel function for transductive conformal prediction that avoids splitting the data into training and calibration sets. 
Extensive illustrative numerical experiments show the effectiveness and the performance of our proposed approach and its associated algorithms. 

The theoretical results that we presented here in the context of reach set approximation are equally valid to approximate compact sets, or the support of probability distributions, in other application domains. Naturally, the computation of the Christoffel function is subject to numerical errors. The impact of such numerical issues will be studied in future work.

\acks{This work has been supported by the French government under the ``France 2030" program as part of the SystemX Technological Research Institute. This work was conducted as part of the Confiance.AI program, which aims to develop innovative solutions for enhancing the reliability and trustworthiness of AI-based systems.}

\bibliography{tebjou-biblio,hybrid}




\end{document}